%% file: main.tex
\def\arxivversion{1}
\documentclass{article}

\makeatletter
\def\input@path{{./}}
\makeatother
\usepackage{iclr2027_conference,times}
\usepackage[T1]{fontenc}
\usepackage{microtype}
\usepackage{amsmath,amssymb,amsthm}
\usepackage{booktabs}
\usepackage{graphicx}
\usepackage{tabularx}
\usepackage{xcolor}
\usepackage{xurl}
\usepackage{xspace}
\usepackage[hidelinks]{hyperref}
\usepackage[nameinlink,noabbrev]{cleveref}
\usepackage[section]{placeins}

\AddToHook{env/table/begin}{\setlength{\belowcaptionskip}{3pt}}
\AddToHook{env/table*/begin}{\setlength{\belowcaptionskip}{3pt}}
\newcommand{\atd}{\textnormal{\textsc{ATD}}\xspace}

\newtheorem{proposition}{Proposition}
\newcommand{\preprintcodeavailability}{}
\input{generated/generated_tangent_macros}

\title{One Patch, Three Roles:\\
What Is Actually Coupled in\\
Autoregressive Time-Series Forecasting?}

\ifdefined\arxivversion%
  \input{arxiv/frontmatter}
\else
  \author{Anonymous Authors}
\fi

\begin{document}
\maketitle
\ifdefined\arxivversion%
  \pagestyle{plain}
  \thispagestyle{plain}
\fi

\begin{abstract}
Patch-based autoregressive time-series forecasting often ties input
representation, learned transitions, and recursive execution to one patch
length.  We ask which of these roles can be adjusted separately.  A supporting
atomic-encoding study finds greater sensitivity to model width than to atom
grouping on the evaluated grid.  Our main finding is that a frozen parent's recursive
trajectory is easier to fit than the observed future with lightweight parallel
exits.  Autoregressive Trajectory Distillation (\atd{}) turns this into
selectable ATD-1/2/4/8 execution, with ATD-1 exactly recovering the parent.  On
a paired four-data-set comparison, ATD-8 reaches
\generatedKEightSpeedup$\times$ end-to-end speedup with stable quality across
widths.  Fewer calls do not automatically remove the parent's existing
forecast error: \atd{} improves trajectory fidelity in all 21 seed runs but
forecast accuracy in only 15 against matched clean-future supervision.
We further find a correctable residual projection along a train-selected
periodic history direction.  Spectrum Tangent applies this correction without
adding neural parameters or Transformer calls.  At horizon 720, it
reduces mean squared error (MSE) and mean absolute error (MAE) by
\tangentHSevenTwentyReduction\% and
\tangentHSevenTwentyMaeReduction\% over seven data sets and two output widths,
while remaining \tangentSpeedupQOne$\times$ faster than recursive inference.
Level and shape projections sometimes disagree.  Trajectory compressibility,
the fidelity--accuracy mismatch, and the correction recur across three public AR parents.  Together these
results separate representation, transition, and execution as AR design axes.
\preprintcodeavailability{}
\end{abstract}

\input{sections/introduction_generated_tangent}
\input{sections/related_work_generated_tangent}
\input{sections/method_generated_tangent}
\input{sections/experiments_generated_tangent}
\input{sections/limitations_generated_tangent}
\input{sections/conclusion_generated_tangent}

\ifdefined\arxivversion\else\clearpage\fi
\section*{Reproducibility statement}
All local configurations use seeds 2021--2023.  Parent and decoder checkpoints
are validation-selected.  Tangent selects its period and coefficient from
training origins alone: blocks 1--3 fit and block 4 confirms the frozen choice;
test labels never affect selection.  We report all test origins in four
reporting-only blocks.  The appendices document checkpoint lineage,
normalization, parameters, calls, timing hardware, selectors, hashes, and
negative cases.

\section*{Statement on the use of AI tools}
Generative AI tools assisted code inspection, experiment orchestration,
plotting, and language editing.  The authors designed the method and
experiments and verified the implementation, numerical evidence, citations,
and claims.

\bibliographystyle{iclr2027_conference}
\bibliography{references}

\appendix
\input{sections/appendix_generated_tangent}

\end{document}

%% file: generated/generated_tangent_macros.tex
\newcommand{\planASixteenMatchedCells}{15}
\newcommand{\planASixteenMatchedWins}{9}
\newcommand{\planASixteenMatchedMacroRelative}{-0.51}
\newcommand{\pTwelveMRangeMse}{2.85}
\newcommand{\pTwelveMRangeMae}{2.03}
\newcommand{\pTwelveDRangeMse}{5.08}
\newcommand{\pTwelveDRangeMae}{4.16}

\newcommand{\targetFitEnergyRatio}{0.150}
\newcommand{\targetFitGeneratedUnexplained}{0.109}
\newcommand{\targetFitTruthUnexplained}{0.845}
\newcommand{\atdPostTrainMinutesMin}{0.17}
\newcommand{\atdPostTrainMinutesMax}{5.38}
\newcommand{\targetFitCellWins}{42}
\newcommand{\targetFitBlockWins}{168}

\newcommand{\timesfmTargetFitCellWins}{12}
\newcommand{\timesfmTargetFitBlockWins}{48}

\newcommand{\fourHorizonJointKFourMse}{0.3121}

\newcommand{\fourHorizonJointKEightMse}{0.3225}

\newcommand{\generatedHSevenTwentyMean}{0.371391}
\newcommand{\tangentHSevenTwentyMean}{0.361963}

\newcommand{\tangentHSevenTwentyRelative}{-2.54}
\newcommand{\tangentHSevenTwentyReduction}{2.54}
\newcommand{\generatedHSevenTwentyMaeMean}{0.377865}
\newcommand{\tangentHSevenTwentyMaeMean}{0.369045}

\newcommand{\tangentHSevenTwentyMaeRelative}{-2.33}
\newcommand{\tangentHSevenTwentyMaeReduction}{2.33}

\newcommand{\tangentHSevenTwentyBlocks}{143}
\newcommand{\tangentHSevenTwentyBlockTrials}{168}
\newcommand{\fourHorizonJointWidthDelta}{+0.010378}
\newcommand{\jointWidthSeedWins}{3}
\newcommand{\jointWidthSeedTrials}{21}
\newcommand{\jointWidthDatasetWins}{1}
\newcommand{\generatedVsJointKEightSeedWins}{19}
\newcommand{\generatedVsJointKEightDatasetWins}{6}
\newcommand{\jointKFourAverageSeedStd}{0.0069}
\newcommand{\jointKEightAverageSeedStd}{0.0124}
\newcommand{\generatedKFourAverageSeedStd}{0.0016}
\newcommand{\generatedKEightAverageSeedStd}{0.0017}
\newcommand{\tangentWrapperOverhead}{15.08}

\newcommand{\tangentSpeedupQOne}{3.24}
\newcommand{\generatedKFourSpeedup}{3.22}
\newcommand{\generatedKEightSpeedup}{5.54}
\newcommand{\namedParentKOneExact}{36}
\newcommand{\namedParentGeneratedKEightWins}{29}
\newcommand{\namedParentTangentKEightWins}{36}
\newcommand{\namedParentDirectTruthWins}{27}
\newcommand{\namedParentDirectFidelityWins}{36}

\newcommand{\namedParentTangentAllWidthWins}{108}
\newcommand{\namedParentTangentAllWidthTrials}{108}
\newcommand{\namedParentTangentAllWidthBlocks}{377}
\newcommand{\namedParentTangentAllWidthBlockTrials}{432}

\newcommand{\tangentNoPoolingWins}{42}
\newcommand{\tangentNoPoolingTrials}{42}
\newcommand{\tangentNoPoolingPeriodMatches}{42}
\newcommand{\tangentNoPoolingCoefficientRatioMin}{0.887}
\newcommand{\tangentNoPoolingCoefficientRatioMax}{1.124}
\newcommand{\tangentLeaveSeedOutWins}{42}
\newcommand{\tangentLeaveSeedOutTrials}{42}
\newcommand{\tangentLeaveSeedOutPeriodMatches}{42}
\newcommand{\tangentLeaveSeedOutCoefficientRatioMin}{0.941}
\newcommand{\tangentLeaveSeedOutCoefficientRatioMax}{1.051}
\newcommand{\exchangeTangentKFourAlpha}{1.496}
\newcommand{\exchangeTangentKEightAlpha}{1.518}
\newcommand{\exchangeTangentTrainConfirmations}{6}
\newcommand{\exchangeTangentKFourHSevenTwentyMseRelative}{+2.82}
\newcommand{\exchangeTangentKFourHSevenTwentyMaeRelative}{+1.22}
\newcommand{\exchangeTangentKEightHSevenTwentyMseRelative}{+4.26}
\newcommand{\exchangeTangentKEightHSevenTwentyMaeRelative}{+1.81}

\newcommand{\exchangeTangentHSevenTwentyBlockWins}{6}
\newcommand{\patchGeometryCells}{42}
\newcommand{\patchGeometryLevelPositive}{31}
\newcommand{\patchGeometryShapePositive}{33}
\newcommand{\patchGeometryFullPositive}{42}

\newcommand{\driftGeometryFarRows}{996}
\newcommand{\driftGeometryShapePositiveRows}{996}

\newcommand{\driftGeometryLevelIncreases}{33}
\newcommand{\driftGeometryCosineIncreases}{28}
\newcommand{\driftGeometryMarginIncreases}{34}
\newcommand{\driftGeometryDepthTrials}{36}
\newcommand{\driftGeometryKEightHSevenTwentyAboveShuffle}{35}
\newcommand{\driftGeometryKEightHSevenTwentyTrials}{36}

%% file: arxiv/frontmatter.tex
\iclrfinalcopy%
\renewcommand{\preprintcodeavailability}{%
  Code is available at \mbox{\url{https://github.com/RowanFFF/ATD-Spectrum-Tangent}}.
}
\author{%
  \normalfont{}Ziang Li$^{1}$, Yue Huang$^{1}$, Guoxu Zhou$^{1}$, Na Han$^{2}$,\\
  \normalfont{}Jie Wen$^{3}$, Lunke Fei$^{1}$, Xiaozhao Fang$^{1}$\thanks{Corresponding author.}\\[0.4em]
  \normalfont\small $^{1}$Guangdong University of Technology\\
  \normalfont\small $^{2}$Guangdong Polytechnic Normal University\\
  \normalfont\small $^{3}$Harbin Institute of Technology\\[0.3em]
  \normalfont\small\texttt{\{sspa\_131101,guoxu.zhou\}@qq.com}\\
  \normalfont\small\texttt{17324004911@163.com}\\
  \normalfont\small\texttt{\{hannagdut,jiewen\_pr,flksxm,xzhfang168\}@126.com}
}
\hypersetup{%
  pdftitle={One Patch, Three Roles: What Is Actually Coupled in Autoregressive Time-Series Forecasting?},
  pdfauthor={Ziang Li, Yue Huang, Guoxu Zhou, Na Han, Jie Wen, Lunke Fei, Xiaozhao Fang},
  pdfsubject={Autoregressive time-series forecasting; ATD and Spectrum Tangent}
}

%% file: sections/introduction_generated_tangent.tex
\section{Introduction}
\label{sec:introduction}

Autoregressive (AR) forecasting repeatedly predicts a future block and feeds
it back as context for later predictions.  Patch-based Transformers package
consecutive observations into tokens, supporting next-patch supervision and
flexible forecast horizons \citep{liu2024timer,liu2025timerxl,
liu2024autotimes,das2024timesfm}.  A general forecasting model must accommodate
data with different temporal scales while allowing efficient long-horizon
execution.

A patch plays a broader role here than in a standard Vision Transformer for
image classification \citep{dosovitskiy2021vit}.  It sets input token granularity
and, in a common next-patch AR design, also the supervised target and recursive
writeback span.  For example, forecasting 96 points with 24-point outputs takes
four recursive calls; 48-point outputs take two.  Larger outputs reduce
feedback steps, but also change the learning problem, so fewer calls alone do
not guarantee lower error.  Patch length thus links representation, training,
execution cost, and the path along which forecast errors accumulate.

\begin{figure*}[t]
  \centering
  \includegraphics[width=0.78\textwidth]{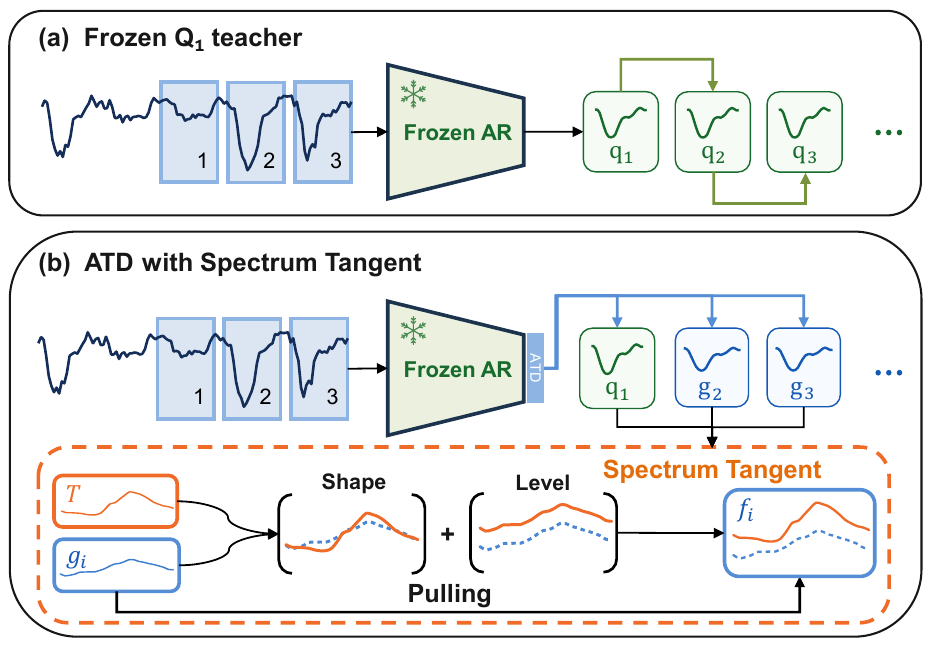}
  \caption{\textbf{The execution-decoupling branch and its residual correction.}
  \textbf{(a)} The frozen one-patch autoregressive (AR) parent (snowflake)
  recursively produces teacher patches $q_i$ from the patch-partitioned
  history.  \textbf{(b)} Autoregressive Trajectory Distillation (\atd{}) adds
  lightweight frozen-parent exits and emits
  $G=(q_1,g_2,\ldots,g_K)$ in one call, with $g_i$ trained to imitate $q_i$.
  Spectrum Tangent moves later proposals toward a train-selected history
  template $T$ before writeback.  The level and mean-free shape projections
  shown here are diagnostic views of that single displacement, not separately
  fitted correction modules.  Dashed blue and solid orange curves denote
  the proposal and its template-directed counterpart; no future label is used
  at inference.  The figure expands the execution branch from \atd{} to the
  fidelity--accuracy mismatch and its correction.  The supporting atomic-encoding
  observation is evaluated in \cref{sec:p12-results,app:p12-interface}.}
  \label{fig:overview}
  \vspace{-0.5em}
\end{figure*}

We ask: \emph{which roles assigned to an AR patch are actually coupled?}  Let
$p_0$ denote the representation atom, $P$ the learned transition span---including
its training target and recursive writeback---and $K$ the number of transitions
committed per call, for a $KP$-point output block.  The conventional monolithic,
one-patch implementation imposes two constraints:
\begin{equation}
  \underbrace{p_0=P}_{\text{representation = transition}},\qquad
  \underbrace{K=1}_{\text{transition = commit}}.
  \label{eq:core-separations}
\end{equation}
We relax these constraints one at a time, testing whether fixed atoms can
encode a transition patch and whether one call can emit several learned
transitions.

On the encoding side, adaptive patching and frequency-specific projections
illustrate the importance of temporal scale for heterogeneous data
\citep{chen2024pathformer,woo2024moirai}.  Our supporting study represents each
patch using shared 12-point atoms and chronological concatenation.  This
preserves the raw linear projection class; matched training gives small, mixed
changes.  More notably, a four-data-set grid shows greater sensitivity to
downstream model width $D$ than to atom grouping $m=P/12$, with different data
sets favoring different widths.  This motivates considering grouping and model
capacity separately when designing a shared forecasting interface.

The main empirical chain begins on the execution side.  We freeze a
validation-selected one-transition AR parent and find that its recursive
trajectory is a much easier target for lightweight parallel exits than the
observed future.  Autoregressive Trajectory Distillation (\atd{}) turns this
compressibility into a reusable compiler: exits for patches $2{:}K$ imitate the
parent's writeback trajectory while patch one remains the parent output.  One
compiled model supports ATD-1, ATD-2, ATD-4, and ATD-8 output blocks, with
ATD-1 exactly recovering the parent,
little forecast-quality variation across widths in the matched comparison, and
substantial acceleration.  Thus the parent transition can remain fixed and
available while deployment commits become wider.

Fewer calls, however, need not remove errors already present in the parent
trajectory.  In the 96-point example, ATD-2 can approximate the four-step parent
rollout in two calls, including its errors.  Against matched clean-future
supervision, parent-trajectory supervision improves rollout fidelity in all 21
seed runs but forecast accuracy in only 15.  This separates efficient execution
from correction of the inherited forecast residual.  Direct clean-future exits
are less stable on the matched grid.  Inspired by classical seasonal
forecasting, Spectrum Tangent tests whether a periodic template from observed
history supplies a correctable direction.  It fits one nonnegative coefficient
along this direction, outside the frozen parent, and adds no neural parameters
or Transformer calls.  Level and mean-free shape projections provide diagnostic
views: their signs can disagree even when the complete direction aligns with
the residual.  Correction gains strengthen at longer horizons.

\begin{samepage}
Our contributions are:
\begin{itemize}
  \item We empirically examine the separation of representation, learned
  transition, and per-call execution.  A supporting atomic-encoding study finds
  greater sensitivity to model width than to atom grouping on the evaluated
  grid, while trajectory compilation enables wider execution with a frozen
  learned transition.  Together these observations distinguish the roles
  commonly assigned to one patch.
  \item We identify the frozen parent's recursive trajectory as a readily
  compressible target and show that fitting it does not necessarily improve
  accuracy against truth.  \atd{} operationalizes the first finding as
  selectable ATD-1/2/4/8 execution; ATD-1 exactly recovers the parent, while
  wider modes provide measured acceleration with near-constant forecast quality
  in the matched comparison.
  \item We identify a correctable projection of the long-horizon residual along
  a periodic history direction.  Spectrum Tangent operationalizes this finding;
  level and shape projections provide complementary diagnostic views.  Tests
  across seven local data sets and three public AR parents document both the
  positive long-horizon result and its counterexamples.
\end{itemize}
\end{samepage}

%% file: sections/related_work_generated_tangent.tex
\section{Related work}
\label{sec:related}

\paragraph{Long-horizon forecasting.}
Long-horizon models use sparse attention, decomposition, patching, multiscale
mixing, or cross-variable structure
\citep{zhou2021informer,wu2021autoformer,zhou2022fedformer,nie2023patchtst,
liu2024itransformer,wang2024timemixer}; basis-expansion and MLP models offer
alternatives \citep{oreshkin2020nbeats,challu2023nhits,das2023tide}.  These
methods optimize clean-future forecasts; our question concerns the roles
inside a frozen AR patch interface.  Simple baselines can also expose
architectural assumptions that benchmark progress has left entangled, as
DLinear did for long-horizon Transformer
comparisons \citep{zeng2023dlinear}.  Our question is complementary to backbone
design: which roles assigned to an AR patch are actually inseparable?

\paragraph{Autoregressive and foundation forecasters.}
DeepAR uses probabilistic autoregressive forecasting \citep{salinas2020deepar};
Timer/Timer-XL, AutoTimes, and TimesFM extend next-patch generation to modern
and pretrained forecasters \citep{liu2024timer,liu2025timerxl,
liu2024autotimes,das2024timesfm}.  Universal and zero-shot models broaden this
direction \citep{ansari2024chronos,woo2024moirai,shi2025timemoe,
liu2025sundial,auer2025tirex}.  Moirai 2.0 predicts multiple tokens
\citep{liu2025moirai2}, whereas Timer-S1 scales serial prediction
\citep{liu2026timers1}; SE-LLM is a recent 672-point-context/96-point-output reference
\citep{liu2026sellm}.  These systems establish AR and multi-token forecasting
as useful interfaces.  Our question is how far representation and execution
can vary while retaining a selected parent's learned transition.

\paragraph{Parallel autoregressive decoding.}
Blockwise decoding, multi-token prediction, and auxiliary heads produce
multiple future tokens per call
\citep{stern2018blockwise,gloeckle2024multitoken,cai2024medusa}.
Speculative decoding uses a draft with online target verification
\citep{leviathan2023speculative}; STRIDE adapts this design to continuous
time-series patches \citep{subbaraman2025stride}.  \atd{} instead makes ATD-1
an exact bypass and removes the online parent pass for wider commits.  We report
rollout fidelity and forecast accuracy separately; the protocol and
operating-point comparison is in
\cref{tab:parallel-positioning,tab:stride-operating-points}.

Jacobi Forcing post-trains on its own parallel Jacobi decoding trajectories
\citep{hu2026jacobiforcing}; \atd{} distills numeric patches together with their
recursive writeback.

\paragraph{Exposure and rollout supervision.}
Classical multi-step forecasting distinguishes recursive, direct, DirRec,
MIMO, and DIRMO strategies \citep{bentaieb2012strategies}.  \atd{} revisits
this taxonomy at deployment: it retains the frozen recursive transition but
emits selectable blocks.  Unlike direct or MIMO forecasting, later outputs
target the parent's rollout rather than clean futures.  Boosting, scheduled
sampling, and dataset aggregation instead address error or exposure mismatch
\citep{bentaieb2014boosting,bengio2015scheduled,ross2011dagger}; rollout-aware
refinement also appears in neural simulation \citep{lippe2023pderefiner}.
Time-series distillation transfers teacher representations or predictions
to a separate forecasting model \citep{guo2026tllm,fu2026rednet}.  \atd{} instead
freezes a validation-selected AR parent, compiles its deployed patch path into
selectable widths, and preserves the original one-transition path as ATD-1.

\paragraph{Patch roles and spectral structure.}
PatchTST demonstrates the value of patch tokens \citep{nie2023patchtst}.
Pathformer adapts pathways across patch scales \citep{chen2024pathformer}, and
Moirai uses multiple patch-size projections to accommodate different
frequencies \citep{woo2024moirai}.  These approaches make temporal grouping an
explicit adaptation choice.  Our supporting atomic-encoding study complements
them by comparing sensitivity to grouping and model width.  TimesFM already
allows different input and output patch lengths \citep{das2024timesfm}; \atd{}
examines widening execution while fitting a frozen parent's recursive path.
Classical seasonal methods expose macroscopic level
and seasonal structure
\citep{winters1960forecasting,cleveland1990stl}, while frequency-aware models
learn it inside the backbone \citep{wu2021autoformer,zhou2022fedformer,
wu2023timesnet}.  Spectrum Tangent instead applies one external direction from
an immutable history template and a period selected on training trajectories.
Level and mean-free shape are diagnostic coordinates of this direction, rather
than separate learned forecasting components.

%% file: sections/method_generated_tangent.tex
\section{Decoupling patch roles}
\label{sec:method}

\subsection{Coupled baseline}

Let $h_t\in\mathbb{R}^{C\times W}$ be the current history with $C$ variables
and $W$ observed points, $P$ the number of points predicted by the original
one-transition forecaster, and $K$ the number of successive transitions emitted
per model call.  Let $U(h,p)$ append patch $p$ and retain the latest $W$ points.
The frozen parent $f_\theta$ includes instance normalization and its inverse,
and returns $\hat p_{t,1}\in\mathbb R^{C\times P}$ in the history's data coordinates:
\begin{equation}
  \hat p_{t,1}=f_\theta(h_t),\qquad
  h_{t+1}=U(h_t,\hat p_{t,1}).
  \label{eq:parent-transition}
\end{equation}
Local parents and exits are channel-independent with shared weights across
variables; no cross-variable mixing is used.  We train $f_\theta$ from scratch
using next-patch MSE and next-patch validation; \atd{} then freezes every
parent parameter.  ``Timer-style'' names this transition/writeback contract,
not the exact Timer architecture.

Execution may commit $K$ such transitions at once, spanning $KP$ points.  For
forecast horizon $H$, the number of structural backbone calls is
\begin{equation}
  C(H;P,K)=\left\lceil\frac{H}{KP}\right\rceil.
  \label{eq:calls}
\end{equation}

We separate representation from transition, then transition from execution,
measuring both parent-trajectory fidelity and forecast accuracy.

\subsection{An encoding-side observation}

Against a monolithic baseline $R:\mathbb R^P\!\to\!\mathbb R^D$, we let
$p_0=12$ and $m=P/p_0$.  One shared linear map
$\phi:\mathbb R^{12}\!\to\!\mathbb R^{16}$ lifts the $m$ chronological atoms;
we concatenate them without pooling and project the $16m$ coordinates to the
data-set-selected width $D=d_{\mathrm{model}}$.  The atom and lift widths are global
interface constants; each parent learns its own $\phi$ shared across atom
positions and chooses $P$ (hence $m$) and $D$.

If $\phi$ has full column rank, the concatenated features retain the ordered
raw patch before the final projection, and every raw linear patch map remains
representable.  \Cref{sec:p12-results} compares
matched training and the sensitivity along $m$ and $D$;
\cref{app:p12-interface} gives the proof and complete grid.

\subsection{Recursive trajectories as compilation targets}
\label{sec:atd}

From each training forecast origin $h_0$, we construct the frozen parent's
deployed teacher trajectory:
\begin{equation}
  h_0^Q=h_0,\quad q_1=f_\theta(h_0^Q),\quad
  h_j^Q=U(h_{j-1}^Q,q_j),\quad
  q_{j+1}=f_\theta(h_j^Q),
  \qquad j=1,\ldots,K-1.
  \label{eq:teacher-rollout}
\end{equation}
For the local parent, each call recomputes per-series mean and standard deviation over the
current $W$-point window, predicts in this normalization frame, maps the patch back for
writeback, shifts, and renormalizes.  This is additional to global benchmark
scaling.  Before one-shot regression, recursive teacher patches are
re-expressed in the initial origin's normalization frame as
$\bar q_j=(q_j-\mu_0)/\sigma_0$, channelwise.
The local state $s_0\in\mathbb R^{C\times D}$ is the last observed token after
the final Transformer block's post-normalization, before the output projection.
Named-parent readouts are specified in \cref{app:named-parent-states}.
An independent residual multilayer perceptron (MLP) $r_j$ predicts each later patch:
\begin{equation}
  \bar g_1=\bar q_1,\qquad
  \bar g_j=\bar q_1+r_j(s_0)\in\mathbb R^{C\times P},\quad j=2,\ldots,K.
  \label{eq:generated-exits}
\end{equation}
Only the $r_j$ are trained, using the origin-normalized rollout targets:
\begin{equation}
  \mathcal L_{\mathrm{ATD}}
    =\frac{1}{CP(K-1)}\sum_{j=2}^{K}
       \left\|\bar g_j-\bar q_j\right\|_F^2.
  \label{eq:atd-loss}
\end{equation}
The data-space proposal $G=[g_1\Vert\cdots\Vert g_K]$ uses
$g_j=\mu_0+\sigma_0\bar g_j$.  We call this compiled model \atd{} and denote
execution that emits $k$ transitions per call by ATD-$k$; ATD-1 bypasses every
auxiliary exit and exactly recovers the frozen parent.
Teacher states and recursive targets may be cached before fitting the
later-patch exits.

\paragraph{Matched target controls.}
Frozen Direct uses the same frozen parent and independent residual exits, but
replaces $\bar q_j$ in \cref{eq:atd-loss} by the identically normalized clean
future patch $\bar p_j^\star$.  Thus
target is the only changed factor.  Joint Direct also uses clean targets but
jointly trains the parent-initialized backbone and $K$ learned future placeholders
to decode $K$ clean-future patches in one causal pass.
These controls isolate the target and distinguish ordinary joint optimization.

\paragraph{Selectable widths.}
An ATD-8 compiled model contains exits $r_2,\ldots,r_8$.  Width $k$ evaluates only exits
$r_2,\ldots,r_k$ and writes back the resulting $k$ patches.  Local proposals
therefore share exact prefixes at a common history, although complete $k>1$
rollouts may diverge after different writebacks.

\medskip
\begin{proposition}[Exact ATD-1 fallback]
\label{prop:atd1}
The complete ATD-1 rollout is pointwise identical to the frozen parent rollout
at every horizon.
\end{proposition}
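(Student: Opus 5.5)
The plan is an induction on the number of calls, comparing the two rollouts state by state. Write $h_t^{\mathrm{ATD}}$ for the history held by ATD-1 before its $(t+1)$-st call and $h_t$ for the frozen parent's history from \cref{eq:parent-transition}, both starting from the same observed origin, $h_0^{\mathrm{ATD}}=h_0$.

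The first step is to pin down what one ATD-1 call computes. By the selectable-width rule, width $k=1$ evaluates exits $r_2,\ldots,r_k$. For $k=1$ this is the empty range, so no residual MLP is ever evaluated. The emitted block is then $G=[g_1]$, and I need to show $g_1$ equals the parent's output. The normalized first patch is $\bar g_1=\bar q_1$ by \cref{eq:generated-exits}, and $\bar q_1=(q_1-\mu_0)/\sigma_0$ with $q_1=f_\theta(h)$ by \cref{eq:teacher-rollout}. Mapping back gives $g_1=\mu_0+\sigma_0\bar q_1=q_1=f_\theta(h)$, provided $\sigma_0\neq 0$ channelwise.

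I expect that proviso to be the main obstacle, even though it is only a technicality. There are two ways to handle it. The first is to note that the parent's instance normalization already uses the same guarded $\sigma$ (for example, with an $\epsilon$ floor), so the forward and inverse maps compose to the identity. The cleaner alternative is to define ATD-1 operationally as returning the parent patch $q_1$ directly, which is what ``bypasses every auxiliary exit'' means. I would adopt that definition and remark that the normalized form agrees with it whenever $\sigma_0>0$. Either way, one ATD-1 call maps a history $h$ to exactly $f_\theta(h)$, with $P$ points per call.

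The induction then closes the argument. Assume $h_t^{\mathrm{ATD}}=h_t$. ATD-1 emits $f_\theta(h_t^{\mathrm{ATD}})=f_\theta(h_t)=\hat p_{t,1}$ and writes it back with the same operator $U$, so $h_{t+1}^{\mathrm{ATD}}=U(h_t,\hat p_{t,1})=h_{t+1}$. This uses that $f_\theta$ is frozen and deterministic at inference, with parameters and normalization recomputed from the window in the same way in both executions.

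It follows that the concatenated forecasts agree patch by patch. By \cref{eq:calls} with $K=1$, both executions use the same $\lceil H/P\rceil$ calls. When $H$ is not a multiple of $P$, both truncate the concatenation to the first $H$ points identically. Hence the two rollouts agree pointwise at every horizon.
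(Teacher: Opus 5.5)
Your proof is correct and follows the same route as the paper. Both executions emit $f_\theta$ of the same history and apply the same deterministic update $U$, so induction on the number of calls gives equality at every horizon. Your $\sigma_0\neq0$ proviso always holds, because the appendix defines $\sigma_c=\sqrt{W^{-1}\sum_t(h_{c,t}-\mu_c)^2+0.001}>0$; your operational reading of the bypass as returning $q_1$ directly is also what supports the bit-exact recovery reported empirically.
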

\begin{proof}
Both paths emit $f_\theta(h_0)$ and apply the same deterministic update $U$.
Their next histories are identical; induction proves equality thereafter.
\end{proof}

\subsection{A fidelity--accuracy mismatch}
\label{sec:fidelity-mismatch}

The parent trajectory already contains forecast error, including error
propagated through recursive writeback.  Fitting this trajectory can preserve
that error even when execution uses fewer calls.  Let $Q_\theta(h)$
denote the $K$-patch serial parent trajectory from
history $h$, $G(h)$ the ATD proposal, and $y$ the matching future.  Define the
rollout and forecast losses as
$\mathcal L_{\mathrm{roll}}(G)=\mathbb E\|G-Q_\theta\|_2^2$ and
$\mathcal L_{\mathrm{fore}}(G)=\mathbb E\|G-y\|_2^2$.  With
$e_{\mathrm{roll}}=G-Q_\theta$ and $e_Q=Q_\theta-y$, the identity
\begin{equation}
  \mathcal L_{\mathrm{fore}}(G)
  =\mathcal L_{\mathrm{roll}}(G)+\mathcal L_{\mathrm{fore}}(Q_\theta)
   +2\mathbb E\langle e_{\mathrm{roll}},e_Q\rangle
  \label{eq:error-decomposition}
\end{equation}
shows why closer imitation need not improve accuracy: parent error is inherited
and the cross term can have either sign.  \Cref{sec:experiments} tests this mismatch.

\subsection{Correcting truth error outside the frozen parent}
\label{sec:tangent}

Spectrum Tangent asks whether the residual $e=y-G$ has a correctable projection
along a simple history-derived direction while leaving the parent frozen.
Inspired by periodic forecasting, it forms a phase-wise template $T_\tau$ from
the normalized observed history at a train-selected, representation-aligned
period $\tau$.  Its phase means are fixed at the forecast origin; modulo-$\tau$
lookup aligns each future point, and the current call's mean and scale map them
to $G$'s coordinates (\cref{app:tangent-selection}).
``Spectrum'' refers to selecting $\tau$ over a period grid, while
``Tangent'' denotes the resulting one-dimensional correction of the frozen
proposal.  For absolute forecast endpoint $\ell$, the complete correction
direction is
\begin{equation}
  d_\tau(G,\ell)=b(\ell)\bigl(T_\tau-G\bigr),
  \label{eq:tangent-direction}
\end{equation}
where $b(\ell)$ is a dimensionless scalar, constant within each patch, taking
$0.25,0.5,1,2,3$ in successive bands separated by endpoints 96, 192, 336, and 672
(\cref{eq:endpoint-ramp}).  Here $\ell$ is measured from the forecast origin;
every call's first slot has zero direction.  Train-only
origins determine
\begin{equation}
  A_\tau=\mathbb E\|d_\tau\|_2^2,\quad
  B_\tau=\mathbb E\langle d_\tau,y-G\rangle,\quad
  V=\mathbb E\|y-G\|_2^2.
  \label{eq:spectrum-moments}
\end{equation}
Among periods attaining the largest positive one-direction explained error
$\max(B_\tau,0)^2/(A_\tau V)$, the selector chooses the smallest and freezes
\begin{equation}
  \alpha^*=\max(B_{\tau^*}/A_{\tau^*},0).
  \label{eq:alpha}
\end{equation}
This is the exact nonnegative minimizer of a convex quadratic
(derivation and selection protocol in \cref{app:tangent-selection}).
The frozen rule applies to slots 2--K:
\begin{equation}
  F=G+\alpha^* d_{\tau^*}(G,\ell).
  \label{eq:tangent-inference}
\end{equation}
The minimizer is exact on the fixed proposal trajectory used to estimate its
moments; closed-loop benefit remains empirical because corrected writeback
changes later histories.  The explicit wrapper uses only observed history and
ATD proposals.  It adds no trainable neural parameters or Transformer calls,
although phase lookup and arithmetic add measurable latency.  The inference
rule uses the complete displacement in \cref{eq:tangent-direction};
\cref{sec:tangent-results} later decomposes it into orthogonal level and
mean-free shape coordinates only to diagnose where its alignment comes from.

%% file: sections/experiments_generated_tangent.tex
\section{Experiments}
\label{sec:experiments}

The experiments follow the empirical chain.  We ask: (RQ1) how do atomic
grouping and model width affect forecasting after factorizing the input patch;
(RQ2) is a frozen transition's recursive trajectory sufficiently compressible
to support wider output blocks with near-constant forecast quality; (RQ3) does
improved rollout fidelity order forecast accuracy, and does the remaining
residual align with a history-derived direction; and (RQ4) do trajectory
compilation, the intervening mismatch, and the correction recur across named AR
parents and data sets?

\subsection{Protocol}

\paragraph{Data, selection, and reporting.}
We use ETTh1, ETTh2, ETTm1, ETTm2~\citep{zhou2021informer}, Weather, Electricity (ECL), and Traffic with lookback
$W=672$, training-set global standardization, and seeds 2021--2023.  We report
rolling horizons $H\in\{96,192,336,720\}$ over every chronological test origin.
We abbreviate these as H96, H192, H336, and H720.
Parents use next-patch validation and multi-transition decoders use closed-loop H720
validation.  Tangent uses one train-only period and coefficient per
data-set--width configuration, shared across its three parent seeds; chronological
selection and pooling controls are detailed in \cref{app:tangent-selection}.
All data-set-specific parent and tangent settings are deferred to
\cref{tab:dataset-config}.

\paragraph{Metrics and systems.}
Let $i$, $s$, $K$, and $b$ index data set, seed, commit width, and chronology
block.  We call $(i,K)$ a configuration, $(i,K,s)$ a seed run, and
$(i,K,s,b)$ a seed--block.  Within the named split, time-ordered origins are
partitioned into four contiguous, near-equal-count blocks.  Training blocks
serve the selector described above; test seed--blocks are reporting-only and
never affect selection.  Block counts are descriptive slices, not independent
random trials.  Mean squared error (MSE) and mean absolute error (MAE) use
standardized benchmark coordinates.
``Equal-configuration'' is an unweighted mean over the stated $(i,K)$
configurations after averaging all three seeds and test origins.
Batch-one RTX5880-Ada-48Q timing includes rolling normalization, writeback,
exits, and wrapper arithmetic; structural calls and milliseconds remain
separate.

\begin{table*}[t]
  \centering
  \caption{Parent qualification and local evidence chain under W672 rolling.
  MSE/MAE ($\downarrow$) use each source's reported H96/H192/H336/H720 Avg.
  row for external methods and the corresponding unrounded mean for local runs.
  $\dagger$ methods are transcribed from Timer-XL Table~12~\citep{liu2025timerxl};
  the local group shows three-seed means over all test origins, with seed standard
  deviations in \cref{tab:joint-direct-stability}.  External and local rows
  are grouped by provenance because their parent-patch protocols differ
  (\cref{tab:dataset-config}); the external block provides context, not a
  controlled ranking against the local rows.  Bold and underline mark the best
  and second-best distinct metric at displayed precision within each group;
  displayed ties share a mark.}
  \label{tab:four-horizon-benchmark}
  \scriptsize
  \setlength{\tabcolsep}{0.35pt}
  \input{generated/table_generated_tangent_benchmark_average.tex}
\end{table*}

\Cref{tab:four-horizon-benchmark} first establishes the frozen transition as a
reasonable local AR parent.  The external block is contextual rather than a
controlled comparison with the local rows.  Following the
compact convention of \citet{liu2025timerxl}, it provides four-horizon context
from Timer, UniTST, iTransformer, and PatchTST
\citep{liu2024timer,liu2024unitst,liu2024itransformer,nie2023patchtst};
the appendix restores DLinear~\citep{zeng2023dlinear} and the other reported
  baselines, together with every per-horizon MSE/MAE entry.  The matched evidence
  below follows the two separations and the correction of inherited forecast
  error.

\subsection{Atomic encoding: sensitivity to grouping and model width}
\label{sec:p12-results}

A shared full-rank lift of 12-point atoms to 16 coordinates, followed by
chronological concatenation, preserves every raw linear patch projection
(\cref{prop:a16-equivalence}); all 21 selected local checkpoints satisfy the
rank condition.  Across five matched data sets, this atomic encoding wins
\planASixteenMatchedWins/\planASixteenMatchedCells{} seed runs and changes pooled
H720 MSE by \planASixteenMatchedMacroRelative\%.  The mixed signs in
\cref{tab:p12-matched} show small, mixed changes after independent training.

The more distinctive observation concerns sensitivity.  In the four-data-set
$m\times D$ grid, the mean within-group normalized MSE/MAE range from
changing $m=P/12$ at fixed $D$ is
\pTwelveMRangeMse\%/\pTwelveMRangeMae\%, versus
\pTwelveDRangeMse\%/\pTwelveDRangeMae\% when changing $D$ at fixed $m$.
The larger variation lies along width, whose preferred value differs by data
set: ETTm2 and Weather attain their lowest MSE at $D=64$, ECL and Traffic at
$D=256$.  Width therefore warrants attention alongside patch grouping when
adapting representations to heterogeneous data.  Here $m$ also changes parent
span $P$ and $D$ changes backbone capacity; the full grid and interpretation
are in \cref{app:p12-interface}.

\subsection{Recursive trajectories can be compiled into wider outputs}

The parent trajectory is a substantially easier matched compilation target.
On 512 train-only origins per seed, its mean squared magnitude is
\targetFitEnergyRatio$\times$ that of the clean future; more importantly, after
normalizing each fit by its own target magnitude, the unexplained fraction is
\targetFitGeneratedUnexplained{} versus \targetFitTruthUnexplained{} for clean futures,
lower in all \targetFitCellWins/42 seed runs and \targetFitBlockWins/168
training seed--blocks.  An independent TimesFM validation analysis repeats the
ordering in \timesfmTargetFitCellWins/12 runs and
\timesfmTargetFitBlockWins/48 blocks.  Together they identify the deployed
trajectory as the simpler matched target for the lightweight exits (full
evidence in \cref{tab:target-fit}).

\begin{table*}[t]
  \centering
  \caption{Two-stage deployment accounting.  (a) The paired ETTh1, Weather,
  ECL, and Traffic prefix comparison (three seeds) tests whether one compiled model
  preserves H720 forecast MSE as commit width changes; ATD-1 bypasses later-patch
  exits.  (b) The seven-data-set $\times$ two-width row records the cost of the
  later macro correction, analyzed in \cref{sec:tangent-results}.
  Rollout MSE measures error against the serial parent; speedup is relative to
  the parent, and retained speedup is relative to ATD without correction.}
  \label{tab:multiwidth}
  \small
  \renewcommand{\arraystretch}{1.05}
  \input{generated/table_generated_tangent_multiwidth.tex}
\end{table*}

One compiled model spans ATD-1 through ATD-8 with only $0.00056$ H720 forecast-MSE range in
the paired comparison; all prefixes are exact at a common origin and complete
ATD-1 rollouts equal the frozen parent in 12 seed runs.  ATD-4 and ATD-8 reach
\generatedKFourSpeedup$\times$ and \generatedKEightSpeedup$\times$ end-to-end
speedup.  ATD-1's 0.96$\times$ also separates exact fallback from measured
runtime.  Thus self-trajectory prediction widens the per-call output with nearly
constant forecasting performance in this comparison, while the frozen parent
transition remains unchanged and exactly available through ATD-1.

\subsection{Trajectory fidelity does not order forecast accuracy}

\begin{table*}[t]
  \centering
  \caption{ATD-8 matched-target control at H720.  \atd{} and Frozen Direct
  share the parent, exit topology, parameter budget, and selector; only supervision changes.
  Deltas are relative changes from Frozen Direct to ATD, so negative is better.
  Rollout error is measured against the serial parent.
  Bold rows improve fidelity but worsen forecast accuracy.
  Raw MSE values are in \cref{tab:target-control-raw}.}
  \label{tab:target-control}
  \small
  \renewcommand{\arraystretch}{1.05}
  \input{generated/table_generated_tangent_target_control.tex}
\end{table*}

The matched targets separate trajectory imitation from forecast accuracy.
\Cref{tab:target-control}
changes only later-patch supervision: \atd{} improves parent-trajectory fidelity in
21/21 seed runs but forecast MSE in only 15/21.  ETTm2 and Traffic are direct
counterexamples---every seed becomes more faithful while forecast MSE worsens.
Closer imitation can retain the parent's existing forecast errors even with
wider outputs: rollout fidelity does not order forecast accuracy.
The data-set means,
block counts, raw values, and the ETTh2 outlier are retained in
\cref{tab:target-control-raw}.

Joint Direct is the shared-backbone clean-future control in
\cref{tab:four-horizon-benchmark}.  On the four-horizon average, widening it
from Joint Direct-4 to Joint Direct-8 improves only \jointWidthSeedWins/\jointWidthSeedTrials{} seeds
and \jointWidthDatasetWins/7 data-set means, while MSE changes from
\fourHorizonJointKFourMse{} to \fourHorizonJointKEightMse{}
(\fourHorizonJointWidthDelta).  Its mean seed standard deviation grows from
\jointKFourAverageSeedStd{} to \jointKEightAverageSeedStd{}, versus
\generatedKFourAverageSeedStd{}/\generatedKEightAverageSeedStd{} for
ATD-4/8.  ATD-8 wins \generatedVsJointKEightSeedWins/21 seeds
and \generatedVsJointKEightDatasetWins/7 means.  Thus clean-future widening is
less stable on this matched grid (full statistics in
\cref{tab:joint-direct-stability}).

\subsection{Long-horizon residual aligns with a macroscopic history direction}
\label{sec:tangent-results}

Spectrum Tangent addresses the forecast residual inherited from the parent
trajectory.  It tests whether a periodic, history-derived direction can reduce
this error outside the frozen parent.  At H720,
MSE falls from \generatedHSevenTwentyMean\ to \tangentHSevenTwentyMean\
(\tangentHSevenTwentyRelative\%) and MAE from \generatedHSevenTwentyMaeMean{} to
\tangentHSevenTwentyMaeMean{} (\tangentHSevenTwentyMaeRelative\%) across all 14
data-set--width means and 42/42 seeds.  Yet only
\tangentHSevenTwentyBlocks/\tangentHSevenTwentyBlockTrials{} chronology blocks
improve; the ETTm2 ATD-4 worst case is $+0.025758$.

\begin{table*}[t]
  \centering
  \caption{Horizon and cross-parent evidence.  (a) Tangent on the seven-data-set
  grid; W/L counts data-set--width improvements/regressions.  (b) Under each
  parent, subcolumns report ATD H720 forecast-MSE seed wins against the parent
  and +Tan. $\Delta$MSE (\%) against ATD.  Negative +Tan. values improve; all
  are 3/3 seed wins.  MSE is not pooled across parents.}
  \label{tab:tangent-evidence}
  \small
  \renewcommand{\arraystretch}{1.04}
  \begin{minipage}[t]{0.74\textwidth}
    \centering
    \textbf{(a) Horizon correction}\par\vspace{2pt}
    \small
    \setlength{\tabcolsep}{2.0pt}
    \input{generated/table_generated_tangent_horizons.tex}
  \end{minipage}
  \par\vspace{2pt}
  \begin{minipage}[t]{\textwidth}
    \centering
    \textbf{(b) Cross-parent transfer at H720}\par\vspace{2pt}
    \small
    \setlength{\tabcolsep}{1.8pt}
    \input{generated/table_generated_tangent_named_parent_crossdata.tex}
  \end{minipage}
\end{table*}

We use Exchange as a weak-seasonality stress test rather than an eighth
positive benchmark cell.  The unchanged train-only correction worsens H720
MSE/MAE by
\mbox{\exchangeTangentKFourHSevenTwentyMseRelative\%/\exchangeTangentKFourHSevenTwentyMaeRelative\%}
for Tangent-4 and
\mbox{\exchangeTangentKEightHSevenTwentyMseRelative\%/\exchangeTangentKEightHSevenTwentyMaeRelative\%}
for Tangent-8
(\cref{tab:tangent-exchange}).

Gains strengthen with horizon: H96 has six wins, two losses, and six exact ties
among 14 configurations, versus 14/14 improvements at H720.  This is consistent with the
accumulated-error interpretation (\cref{tab:tangent-evidence}; full counts in
\cref{tab:tangent-h720-internal}).

\paragraph{Diagnostic level--shape view.}
ETTh2 and ECL/Traffic exhibit opposing level--shape signs.  The complete
direction is positively aligned in 42/42 runs, while neither coordinate
alone describes all data sets; full results are in
\cref{tab:tangent-component-risk,tab:tangent-main-drift-geometry,tab:tangent-sensitivity}.

The wrapper adds no neural parameters or Transformer calls but incurs
\tangentWrapperOverhead\% latency overhead; it remains
\tangentSpeedupQOne$\times$ faster than recursive parent inference.

\subsection{The execution-side phenomena recur across named AR parents}

Across AutoTimes--GPT2, Timer-base-84m, and TimesFM-2.5-200M, all
\namedParentKOneExact/36 ATD-1 rollouts recover the parent.  Panel (b) of
\cref{tab:tangent-evidence} separates the two ATD-8 stages: ATD improves on the
recursive parent in \namedParentGeneratedKEightWins/36 seeds; Tangent then
improves the matched ATD in \namedParentTangentKEightWins/36
($-0.84\%$ to $-15.66\%$).  Across ATD-2/4/8, Tangent improves
\namedParentTangentAllWidthWins/\namedParentTangentAllWidthTrials{}
runs and \namedParentTangentAllWidthBlocks/\namedParentTangentAllWidthBlockTrials{}
blocks; the ATD-8 target control gives \namedParentDirectFidelityWins/36 fidelity but only
\namedParentDirectTruthWins/36 forecast wins (full results in
\cref{tab:named-parents}).

%% file: generated/table_generated_tangent_benchmark_average.tex
\begin{tabular*}{\textwidth}{@{\extracolsep{\fill}}l*{20}{r}@{}}
\toprule
Models & \multicolumn{10}{c}{Author-reported$^\dagger$} & \multicolumn{10}{c}{Local, three seeds} \\
\cmidrule(lr){2-11}\cmidrule(lr){12-21}
 & \multicolumn{2}{c}{Timer-XL} & \multicolumn{2}{c}{Timer} & \multicolumn{2}{c}{UniTST} & \multicolumn{2}{c}{iTrans.} & \multicolumn{2}{c}{PatchTST} & \multicolumn{2}{c}{Parent} & \multicolumn{2}{c}{Joint-4} & \multicolumn{2}{c}{Joint-8} & \multicolumn{2}{c}{ATD-8} & \multicolumn{2}{c}{+Tangent-8} \\
 & \multicolumn{2}{c}{(\citeyear{liu2025timerxl})} & \multicolumn{2}{c}{(\citeyear{liu2024timer})} & \multicolumn{2}{c}{(\citeyear{liu2024unitst})} & \multicolumn{2}{c}{(\citeyear{liu2024itransformer})} & \multicolumn{2}{c}{(\citeyear{nie2023patchtst})} & \multicolumn{2}{c}{(Recursive)} & \multicolumn{2}{c}{(Clean)} & \multicolumn{2}{c}{(Clean)} & \multicolumn{2}{c}{(Rollout)} & \multicolumn{2}{c}{(Corrected)} \\
Dataset & MSE & MAE & MSE & MAE & MSE & MAE & MSE & MAE & MSE & MAE & MSE & MAE & MSE & MAE & MSE & MAE & MSE & MAE & MSE & MAE \\
\midrule
ETTh1 & \textbf{0.409} & \textbf{0.430} & 0.418 & 0.436 & 0.429 & 0.447 & 0.421 & 0.445 & \underline{0.412} & \underline{0.435} & 0.362 & 0.405 & 0.409 & 0.427 & 0.430 & 0.439 & \underline{0.361} & \underline{0.403} & \textbf{0.355} & \textbf{0.398} \\
ETTh2 & \textbf{0.352} & \underline{0.402} & 0.382 & 0.418 & 0.384 & 0.428 & 0.389 & 0.421 & \underline{0.359} & \textbf{0.400} & \underline{0.307} & \underline{0.371} & 0.357 & 0.403 & 0.348 & 0.408 & 0.316 & 0.376 & \textbf{0.305} & \textbf{0.370} \\
ETTm1 & 0.359 & \textbf{0.382} & \underline{0.352} & \underline{0.383} & \underline{0.352} & 0.388 & 0.376 & 0.403 & \textbf{0.349} & 0.385 & 0.346 & 0.381 & 0.350 & 0.391 & 0.363 & 0.401 & \underline{0.342} & \underline{0.379} & \textbf{0.339} & \textbf{0.375} \\
ETTm2 & 0.271 & 0.322 & 0.275 & 0.327 & \underline{0.265} & \textbf{0.306} & 0.290 & 0.340 & \textbf{0.261} & \underline{0.318} & 0.282 & 0.330 & \underline{0.273} & \underline{0.328} & 0.281 & 0.332 & 0.277 & \underline{0.328} & \textbf{0.269} & \textbf{0.321} \\
ECL & \textbf{0.155} & \textbf{0.246} & \underline{0.161} & \underline{0.251} & 0.163 & 0.257 & 0.164 & 0.258 & 0.169 & 0.268 & \underline{0.153} & \underline{0.248} & 0.162 & 0.262 & 0.169 & 0.271 & \underline{0.153} & \underline{0.248} & \textbf{0.152} & \textbf{0.247} \\
Traffic & \textbf{0.374} & \textbf{0.255} & \underline{0.384} & \underline{0.259} & 0.385 & 0.265 & \underline{0.384} & 0.274 & 0.391 & 0.275 & \underline{0.384} & \underline{0.263} & \underline{0.384} & \underline{0.263} & 0.385 & 0.264 & 0.386 & 0.265 & \textbf{0.382} & \textbf{0.261} \\
Weather & 0.240 & 0.273 & 0.232 & \underline{0.270} & \underline{0.231} & 0.272 & 0.266 & 0.291 & \textbf{0.226} & \textbf{0.268} & \underline{0.235} & \underline{0.272} & 0.250 & 0.299 & 0.281 & 0.329 & \textbf{0.234} & \underline{0.272} & \textbf{0.234} & \textbf{0.271} \\
\midrule
Avg. & \textbf{0.309} & \textbf{0.330} & 0.315 & \underline{0.335} & 0.316 & 0.338 & 0.327 & 0.347 & \underline{0.310} & 0.336 & \underline{0.295} & \underline{0.324} & 0.312 & 0.339 & 0.323 & 0.349 & 0.296 & \underline{0.324} & \textbf{0.291} & \textbf{0.320} \\
\bottomrule
\end{tabular*}

%% file: generated/table_generated_tangent_multiwidth.tex
\begin{tabular*}{\textwidth}{@{\extracolsep{\fill}}lrrrrr@{}}
\toprule
\multicolumn{6}{@{}l@{}}{\textbf{(a) Selectable prefixes}} \\
Mode & Calls & Forecast MSE & Rollout MSE & Latency (ms) & Speedup \\
\midrule
ATD-1 & 15.25 & 0.3402 & 0.0000 & 23.19 & 0.96$\times$ \\
ATD-2 & 7.75 & 0.3405 & 0.0069 & 12.89 & 1.73$\times$ \\
ATD-4 & 4.00 & 0.3400 & 0.0089 & 6.93 & 3.22$\times$ \\
\textbf{ATD-8} & 2.00 & 0.3401 & 0.0112 & 4.03 & \textbf{5.54$\times$} \\
\midrule
\end{tabular*}
\par\nobreak\vspace{3pt}
\begingroup
\renewcommand{\arraystretch}{1.18}
\begin{tabularx}{\textwidth}{@{}*{4}{>{\centering\arraybackslash}X}@{}}
\multicolumn{4}{@{}l@{}}{\textbf{(b) Tangent wrapper timing}} \\
\addlinespace[2pt]
Wrapper & $\Delta$ latency & Speedup retained & Speedup vs parent \\
\midrule
\textbf{+Tangent} & +15.08\% & 86.89\% & 3.24$\times$ \\
\bottomrule
\end{tabularx}
\endgroup

%% file: generated/table_generated_tangent_target_control.tex
\begin{tabular*}{\textwidth}{@{\extracolsep{\fill}}lrrrr@{}}
\toprule
& \multicolumn{2}{c}{Forecast error} & \multicolumn{2}{c}{Rollout error} \\
\cmidrule(lr){2-3}\cmidrule(lr){4-5}
Dataset & $\Delta$MSE (\%) & Seed wins & $\Delta$MSE (\%) & Seed wins \\
\midrule
ETTh1 & -8.1 & 3/3 & -85.1 & 3/3 \\
ETTh2 & -81.0 & 3/3 & -99.3 & 3/3 \\
ETTm1 & -1.8 & 3/3 & -70.2 & 3/3 \\
\textbf{ETTm2} & \textbf{+3.5} & 0/3 & -45.8 & 3/3 \\
Weather & -2.3 & 3/3 & -69.4 & 3/3 \\
ECL & -2.3 & 3/3 & -79.1 & 3/3 \\
\textbf{Traffic} & \textbf{+0.9} & 0/3 & -69.2 & 3/3 \\
\bottomrule
\end{tabular*}

%% file: generated/table_generated_tangent_horizons.tex
\begin{tabular*}{\linewidth}{@{\extracolsep{\fill}}rrrr@{}}
\toprule
Horizon & Config. W/L & Seed wins & Mean $\Delta$MSE \\
\midrule
96 & 6/2 & 18/42 & -0.0011 \\
192 & 12/2 & 36/42 & -0.0034 \\
336 & 13/1 & 40/42 & -0.0061 \\
\textbf{720} & 14/0 & 42/42 & \textbf{-0.0094} \\
\bottomrule
\end{tabular*}

%% file: generated/table_generated_tangent_named_parent_crossdata.tex
\begin{tabular*}{\linewidth}{@{\extracolsep{\fill}}lrrrrrr@{}}
\toprule
& \multicolumn{2}{c}{AutoTimes} & \multicolumn{2}{c}{Timer} & \multicolumn{2}{c}{TimesFM} \\
\cmidrule(lr){2-3}\cmidrule(lr){4-5}\cmidrule(lr){6-7}
Dataset & ATD & +Tangent & ATD & +Tangent & ATD & +Tangent \\
& Seed wins & $\Delta$MSE (\%) & Seed wins & $\Delta$MSE (\%) & Seed wins & $\Delta$MSE (\%) \\
\midrule
ETTh1 & 0/3 & -1.56 & 3/3 & -1.93 & 3/3 & -2.04 \\
ETTh2 & 2/3 & -6.14 & 0/3 & -4.19 & 3/3 & -6.84 \\
ETTm1 & 3/3 & -0.84 & 3/3 & -15.66 & 3/3 & -5.06 \\
Weather & 3/3 & -3.07 & 3/3 & -5.83 & 3/3 & -12.32 \\
\bottomrule
\end{tabular*}

%% file: sections/limitations_generated_tangent.tex
\section{Scope and limitations}
\label{sec:limitations}

The scope is patch-based AR point forecasting.  Atomic encoding is evaluated
with separately trained parents; it does not establish a shared universal
encoder.  \atd{} requires teacher trajectories; only ATD-1 is
exact.  Tangent assumes recurrence: H96 retains regressions, 143/168 H720 blocks
improve, and Exchange degrades.  Horizon trends do not isolate recursive error
from other forecast errors.  Timing is hardware-specific and measures batch-one
latency; greater GPU utilization at larger batches may reduce the speedup.
Offline caching and exit fitting must be amortized over repeated forecasts
(\cref{app:additional-results}); named-parent evidence establishes compatibility,
not throughput.

%% file: sections/conclusion_generated_tangent.tex
\section{Conclusion}
\label{sec:conclusion}

An AR patch often couples representation, training transition, and execution.
Atomic encoding shows greater sensitivity to model width than to grouping on
the evaluated grid, while parent trajectories support wider \atd{} execution.
Fewer calls can retain the parent's existing errors: closer imitation does not
order truth accuracy.  Spectrum Tangent corrects an aligned component of the
long-horizon residual using periodic history; level and shape are diagnostic
views.  These findings clarify which patch roles can be separated, while showing
that more efficient execution and forecast-error correction remain distinct
objectives.

%% file: sections/appendix_generated_tangent.tex
\section{Experimental details and complete result tables}
\label{app:additional-results}
\small

\subsection{Complete four-horizon forecasting results}

The reported entries below are transcribed entry-for-entry from Timer-XL
Table~12~\citep{liu2025timerxl}.  It covers Timer, UniTST, iTransformer,
DLinear, PatchTST, TimesNet, Non-stationary Transformer, and Autoformer
\citep{liu2024timer,liu2024unitst,liu2024itransformer,zeng2023dlinear,
nie2023patchtst,wu2023timesnet,liu2022nonstationary,wu2021autoformer}.
These are author-reported comparisons, not local reruns.  We therefore keep their
P96-output protocol separate from our data-set-specific-$P$ local group even
though both use W672 rolling forecasts to H96/H192/H336/H720.

\begin{table*}[t]
  \centering
  \caption{Complete reported MSE/MAE entries transcribed from Timer-XL
  Table~12.  One source model per data set is trained at W672/P96 and rolled
  to four horizons; Avg. reproduces the source table's printed Avg. row rather
  than re-averaging its rounded entries.  Bold/underline mark the best/second-best
  distinct reported metric at displayed precision in each row; ties share a mark.}
  \label{tab:timerxl-full-results}
  \scriptsize
  \setlength{\tabcolsep}{0.6pt}
  \input{generated/table_generated_tangent_external_full.tex}
\end{table*}

\begin{table*}[t]
  \centering
  \caption{Complete local MSE/MAE entries.  Every entry averages all test
  origins and seeds 2021--2023; Avg. is the four-horizon mean, and
  bold/underline mark the best/second-best distinct local metric at displayed
  precision in each row; ties share a mark.
  Joint Direct uses the same parent initialization and closed-loop H720 selector;
  ATD and Tangent use the frozen selections reported in
  \cref{tab:dataset-config}.}
  \label{tab:local-full-results}
  \small
  \setlength{\tabcolsep}{0.9pt}
  \input{generated/table_generated_tangent_local_full.tex}
\end{table*}

\begin{table*}[t]
  \centering
  \caption{Three-seed four-horizon stability of the clean-future joint predictor
  and ATD.  For each seed, MSE/MAE is
  first averaged over H96/H192/H336/H720; entries then report mean$\pm$sample
  standard deviation over seeds 2021--2023.  The Avg. column is the
  equal-data-set mean and, after $\pm$, the mean of the seven data-set-wise
  standard deviations; it is not a standard deviation across heterogeneous
  data sets.}
  \label{tab:joint-direct-stability}
  \small
  \setlength{\tabcolsep}{1.5pt}
  \input{generated/table_generated_tangent_joint_stability.tex}
\end{table*}

Joint Direct appends learned future placeholders to the observed token
sequence and jointly updates them with the parent-initialized backbone against clean
future patches.  It is Moirai-2.0-style in its multi-token objective
\citep{liu2025moirai2}, but is a point-MSE control rather than a reproduction
of Moirai 2.0's probabilistic architecture.  On the four-horizon average, the
wider Joint Direct-8 control has larger average seed dispersion than Joint Direct-4 and improves only
\jointWidthSeedWins/\jointWidthSeedTrials{} seed pairs, whereas ATD's
frozen backbone keeps dispersion substantially smaller.
\FloatBarrier{}

\subsection{Data-varying configurations}

All local runs use $W=672$, training-set global standardization, seeds 2021--2023,
eight attention heads, and feed-forward width $4D$.  Parent checkpoints use
next-patch validation; ATD-4/8 checkpoints use closed-loop H720
validation.  \Cref{tab:dataset-config} records every data-varying parent
configuration and the frozen tangent selection shared by all three seeds in
each data-set--width configuration.  Each 12-point atom is lifted to 16 coordinates; $\alpha^*$ is the
effective coefficient applied in \cref{eq:tangent-inference}.  The candidate
period grid is the common set of 12-sample multiples through 216, the largest
multiple represented at least three times in $W=672$.  The global
$b(\ell)=(0.25,0.5,1,2,3)$ ramp at boundaries $(96,192,336,672)$ is fixed before
moment fitting and is unchanged across data sets, seeds, and $K$.  The grid and
ramp are design constants, not data-varying selected hyperparameters.

Local ATD uses channel-independent, zero-initialized two-layer MLP exits while
the parent remains in evaluation mode.  Local instance normalization uses
$\mu_c=W^{-1}\sum_t h_{c,t}$ and
$\sigma_c=\sqrt{W^{-1}\sum_t(h_{c,t}-\mu_c)^2+0.001}$; a bar denotes this
frame, distinct from training-set global standardization.
Across all 42 ATD/Direct seed runs,
complete ATD-1 differences are zero and full-origin H720 MSE matches the frozen
evaluation records within $1.46\times10^{-10}$.

The 42 local ATD-4/8 post-training jobs took
\atdPostTrainMinutesMin--\atdPostTrainMinutesMax{} minutes each in the campaign
logs, including process startup, teacher-state/target caching, exit fitting,
and validation, but excluding parent training and final testing.  These are
logged job wall times, not isolated GPU-time measurements.  Deployment savings
amortize this one-time cost only after roughly
$N>C_{\rm post}/(t_{\rm parent}-t_{\rm ATD})$ forecasts, when the denominator is
positive and all times refer to a matched hardware/batch configuration;
we do not infer a break-even count by mixing unmatched timing records.

\begin{table*}[!t]
  \centering
  \caption{Data-set-specific parent and tangent hyperparameters.  $P$, $D$,
  and $L$ are parent patch length, model width, and Transformer depth.  For
  each $K$, $\tau^*$ and $\alpha^*_K$ are fit after pooling seeds 2021--2023 on
  training blocks 1--3 and are shared by their three runs.  The separately
  selected local ATD-4/8 periods coincide, so panel (a) shows $\tau^*$ once.
  Panel (b) supplies all named-parent ATD-2/4/8 selections used in
  \cref{tab:named-parents,tab:tangent-drift-geometry}, including $\alpha^*_2$.
  No local ATD-2+Tangent result is reported.}
  \label{tab:dataset-config}
  \scriptsize
  \setlength{\tabcolsep}{2.6pt}
  \textbf{(a) Local parents}\par\vspace{2pt}
  \input{generated/table_generated_tangent_dataset_config.tex}
  \par\vspace{6pt}
  \textbf{(b) Named-parent tangent selections}\par\vspace{2pt}
  \input{generated/table_generated_tangent_named_config.tex}
\end{table*}
\FloatBarrier{}

\subsection{Named-parent state extraction}
\label{app:named-parent-states}

For each named parent, $s_0\in\mathbb R^{C\times D_{\rm parent}}$ contains
one final-token hidden vector per channel at the current ATD call origin.
The implementation folds channels into the batch axis; exit weights are
shared across channels, without cross-variable mixing.

\paragraph{AutoTimes-GPT2 ($D_{\rm parent}=768$).}
We select \texttt{last\_hidden\_state[:, -1]} from GPT2: the last observed
96-point token after the final Transformer block and GPT2's final
\texttt{ln\_f}, before the AutoTimes MLP decoder.

\paragraph{Timer-base-84m ($D_{\rm parent}=1024$).}
We select \texttt{hidden\_states[-1][:, -1]} from Timer's prediction output:
the last consumed 96-point token after the final decoder block and the
stack's final LayerNorm, before \texttt{lm\_head}.

\paragraph{TimesFM-2.5-200M ($D_{\rm parent}=1280$).}
We select \texttt{output\_embeddings[:, -1]} from the native forward pass:
the last consumed 32-point input token after the final Transformer block,
before the point and quantile output projections.  There is no additional
stack-final normalization.  This hidden vector feeds exits predicting
128-point output blocks; it is not the forecast or a pooled output block.

All three readouts enter the residual exit MLPs directly, without additional
state normalization, projection, or temporal/layer pooling.  The frozen
parent remains in evaluation mode.  AutoTimes recomputes the rolling-window
encoding at each call; Timer and TimesFM retain their native KV caches and
refresh the readout after consuming the committed patches.  Thus $s_0$ is
call-local, not a vector held fixed throughout the forecast.  KV caches and
normalization statistics remain part of parent execution, not extra exit inputs.

\FloatBarrier{}
\subsection{Additional analyses}

\begin{table*}[!htbp]
  \centering
  \caption{Matched-target fitting audits.  The local parent uses train-only origins
  over seven data sets and ATD-4/8; TimesFM uses a shared validation cache over
  four data sets at ATD-8.  Each has three seeds.  Energy ratio is parent trajectory
  over the clean future; unexplained fraction is fitted MSE divided by each
  target's own energy.  Runs/seed--blocks count a lower fraction for the parent trajectory.}
  \label{tab:target-fit}
  \scriptsize
  \renewcommand{\arraystretch}{1.05}
  \input{generated/table_generated_tangent_target_fit.tex}
\end{table*}

\begin{table}[h]
  \centering
  \caption{Raw values underlying \cref{tab:target-control}.  Entries average
  three seeds; ATD and Frozen Direct differ only in the later-patch target.}
  \label{tab:target-control-raw}
  \scriptsize
  \setlength{\tabcolsep}{1.3pt}
  \input{generated/table_generated_tangent_target_control_raw.tex}
\end{table}

The absence of a STRIDE row in the matched result tables is a protocol
boundary rather than an omitted nearest baseline.  Its Timer-XL operating
point couples a P96/context-1536 target, a 0.125$\times$ draft, an acceptance
temperature, and acceptance-dependent target verification.  Our local system
instead times one W672 compiled model with data-specific $P$ and fixed commit
widths.  A controlled systems comparison would need to hold fixed the accepted
parent checkpoint, draft construction, context and patch sizes, batch size,
hardware, accuracy gate, and all draft, verification, fallback, and wrapper
work.

\begin{table}[h]
  \centering
  \caption{Nearby parallel-generation designs.  ``Runtime target'' means
  invocation of the accepted autoregressive model after proposal.  STRIDE
  keeps the target frozen but runs a separate draft--verify path online.}
  \label{tab:parallel-positioning}
  \scriptsize
  \begin{tabularx}{\columnwidth}{lXcc}
    \toprule
    Method & Parallel signal & Post-train & Runtime target \\
    \midrule
    Moirai 2.0 & clean future patches & joint & no \\
    STRIDE & separate-draft proposals & no & verify \\
    Jacobi Forcing & Jacobi generation trajectory & yes & refine \\
    \textbf{\atd{}} & serial deployed parent trajectory & exits & no \\
    \bottomrule
  \end{tabularx}
\end{table}

\begin{table*}[t]
  \centering
  \caption{Closest acceleration reference without cross-protocol ranking.
  STRIDE$^\dagger$ entries are author-reported Timer-XL H720 points from arXiv v2,
  Table~1~\citep{subbaraman2025stride}: batch one, K3, acceptance temperature 0.25, and GPU-compute timing
  on one H200.  Local speed uses batch-one RTX5880-Ada-48Q timing on the
  four-data-set paired timing subset; local accuracy averages seven data sets
  and three seeds.  Thus rows expose the operating contracts but are not a
  controlled speed leaderboard.}
  \label{tab:stride-operating-points}
  \scriptsize
  \setlength{\tabcolsep}{3.5pt}
  \input{generated/table_generated_tangent_stride.tex}
\end{table*}

\begin{table}[h]
  \centering
  \caption{Transfer across three named AR parents and four data sets.
  (a) ATD-8 H720 counts are seeds out of three; Tangent delta is relative to the
  matched ATD model, and Frozen Direct changes only the later-patch
  target.  ATD-1 exact denotes bit-exact complete-rollout recovery (36/36 overall).
  (b) Test-block wins across ATD-2/4/8; each parent--width entry contains 48 blocks,
  and each width total contains 144 blocks.
  MSE is never pooled across parents.}
  \label{tab:named-parents}
  \scriptsize
  \textbf{(a) ATD-8 H720 seed counts}\par\vspace{2pt}
  \setlength{\tabcolsep}{2.8pt}
  \input{generated/table_generated_tangent_named_parents.tex}
  \par\vspace{4pt}
  \textbf{(b) Multiwidth test-block wins}\par\vspace{2pt}
  \setlength{\tabcolsep}{2.8pt}
  \input{generated/table_generated_tangent_transfer_stability.tex}
\end{table}

The four data sets are the common frozen compatibility grid across all three
named-parent campaigns: ETTh1, ETTh2, ETTm1, and Weather.  This study changes
the parent architecture rather than repeating the seven-data-set local
benchmark.  ETTm2, ECL, and Traffic have no named-parent results in this grid;
the transfer claim is limited to the four evaluated data sets.

With ATD-8, compilation improves over the recursive parent forecast MSE in
\namedParentGeneratedKEightWins/36 parent--data--seed runs; the matched target
control favors ATD in \namedParentDirectFidelityWins/36 fidelity but only
\namedParentDirectTruthWins/36 forecast comparisons.  Tangent then improves
ATD in \namedParentTangentKEightWins/36 runs; complete values are in
\cref{tab:named-parents}.

\section{Atomic encoding: construction and sensitivity}
\label{app:p12-interface}
\small

We fix $p_0=12$ because it is a common divisor of every studied parent span
$P\in\{24,48,96\}$; on hourly data it also corresponds to half a day, but it
has no universal physical-period meaning across the mixed sampling rates.  The
atom width 16 is likewise a fixed overcomplete lift that permits full column
rank, not a data-set-specific semantic choice or a separately tuned module.
In the tables, P12--16 denotes this 12-point atom and 16-coordinate lift.

For $m=P/12$ chronological atoms $u_q\in\mathbb R^{12}$, each parent learns one
map $\phi$ shared across atom positions and uses
\begin{equation}
  z_P=\psi_P[\phi(u_1)\Vert\cdots\Vert\phi(u_m)],\qquad
  \phi:\mathbb R^{12}\!\to\mathbb R^{16},\quad
  \psi_P:\mathbb R^{16m}\!\to\mathbb R^{d_{model}}.
  \label{eq:p12-interface}
\end{equation}

The following proposition records the class-preservation property used by the
encoding-side control.

\begin{proposition}[Information-preserving atomic factorization]
\label{prop:a16-equivalence}
If $\phi$ has full column rank, the concatenated atomic encoding is injective.
Moreover, for every raw linear patch projection
$R:\mathbb R^{12m}\to\mathbb R^{d_{model}}$ there exists a $\psi_P$ such that
\cref{eq:p12-interface} equals $R$ for every input.
\end{proposition}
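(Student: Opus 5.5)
The plan is to treat everything as linear algebra on block matrices. Since $\phi$ is linear, we write $\phi(u)=\Phi u$ with $\Phi\in\mathbb R^{16\times 12}$. The concatenation map $u=(u_1,\ldots,u_m)\mapsto[\phi(u_1)\Vert\cdots\Vert\phi(u_m)]$ is then the block-diagonal matrix $B=I_m\otimes\Phi\in\mathbb R^{16m\times 12m}$, acting on the chronologically ordered raw patch $u\in\mathbb R^{12m}$. The whole argument reduces to showing that $B$ has a left inverse.

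\textbf{Step 1 (injectivity).} Full column rank of $\Phi$ means $\Phi v=0$ implies $v=0$. If $Bu=Bu'$, then blockwise $\Phi(u_q-u'_q)=0$ for every $q$, so $u_q=u'_q$ for all $q$. Because the atoms are concatenated in chronological order, this also recovers the ordered patch, and hence $u=u'$.

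\textbf{Step 2 (a left inverse).} Full column rank gives that $\Phi^\top\Phi$ is invertible. Set
\[
\Phi^+=(\Phi^\top\Phi)^{-1}\Phi^\top,
\]
so that $\Phi^+\Phi=I_{12}$. Then $B^+=I_m\otimes\Phi^+$ satisfies
\[
B^+B=I_m\otimes(\Phi^+\Phi)=I_{12m}.
\]

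\textbf{Step 3 (realizing $R$).} Given a raw linear projection $R\in\mathbb R^{d_{model}\times 12m}$, define $\psi_P=RB^+$, which is linear from $\mathbb R^{16m}$ to $\mathbb R^{d_{model}}$. For every input,
\[
z_P=\psi_P Bu=RB^+Bu=Ru.
\]
If the raw projection and $\psi_P$ both carry biases, the same bias can simply be reused in $\psi_P$.

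No step is a real obstacle. The only points needing care are stating explicitly that $\phi$ and $\psi_P$ are linear (possibly affine), and fixing the convention that the raw patch is ordered the same way as the atom concatenation. Because of that shared ordering, no permutation is needed when composing $R$ with $B^+$.
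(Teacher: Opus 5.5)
Your proposal is correct and follows the paper's proof. Both write the encoding as the block-diagonal map $\operatorname{diag}(\phi,\ldots,\phi)$, take a left inverse of $\phi$ (your $\Phi^+=(\Phi^\top\Phi)^{-1}\Phi^\top$ is one concrete choice of the paper's $L$), and set $\psi_P=R\,\operatorname{diag}(L,\ldots,L)$; your separate Step~1 is harmless but redundant, since injectivity already follows from that left inverse (and if $\phi$ itself carried a bias $c$, you would have to subtract $\mathbf 1_m\otimes c$ before applying $RB^+$, not merely reuse a bias).
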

\begin{proof}
Full column rank guarantees a left inverse $L\in\mathbb R^{12\times16}$
satisfying $L\phi=I_{12}$.  Let
$L_m=\operatorname{diag}(L,\ldots,L)$ and choose $\psi_P=RL_m$.
Then $\psi_P\operatorname{diag}(\phi,\ldots,\phi)=R$.
\end{proof}
Thus each ordered raw patch can be recovered from its atomic encoding, and
chronological concatenation preserves the raw linear function class.  This
establishes class preservation; optimization equivalence is evaluated
empirically rather than guaranteed by the construction.

This injectivity statement extends algebraically to multivariate patches:
applying the lift independently to $C$ variables gives a block-diagonal map
with left inverse $I_C\otimes L_m$.  Thus any linear cross-variable patch
projection can also be represented with a suitable subsequent joint map.
Our implemented parents instead share a per-variable projection and have no
cross-variable mixing.  The sensitivity results therefore do not establish
the same empirical behavior for cross-variable models, nor does a later
width-reducing projection necessarily preserve all input information.

\begin{table}[h]
  \centering
  \caption{Atomic input interfaces and a direct checkpoint audit of the
  sufficient condition in \cref{prop:a16-equivalence}.  The final column counts
  selected seed checkpoints whose learned $16\times12$ atom lift has full
  column rank.  This is a structural audit, not an accuracy comparison or an
  exhaustive search over atom sizes.}
  \label{tab:p12-interface}
  \scriptsize
  \setlength{\tabcolsep}{2.2pt}
  \input{generated/table_generated_tangent_plan_a16.tex}
\end{table}

\begin{table}[h]
  \centering
  \caption{Matched H720 representation control.  Raw and P12--16 entries are
  three-seed mean $\pm$ sample standard deviation of MSE; the final column
  averages seed-paired relative changes (negative favors P12--16).
  ETTm2$^\ast$ uses the stricter same-draw, exact-transplant initialization;
  both arms are then trained independently.  Other rows retain matched
  architectures with independent draws.}
  \label{tab:p12-matched}
  \scriptsize
  \setlength{\tabcolsep}{2.5pt}
  \input{generated/table_generated_tangent_plan_a16_matched.tex}
\end{table}

The matched ablation in \cref{tab:p12-matched} changes only the raw
patch projection versus the chronological atomic parameterization while retaining
the same parent patch length, backbone width and depth, feed-forward width,
training protocol, full test origins, and seeds.  It covers every clean matched
control available in the frozen evaluation records: five data sets and three
seeds.

The atomic encoding wins \planASixteenMatchedWins{} of
\planASixteenMatchedCells{} matched seed--data-set
runs, with an equally weighted pooled H720 change of
\planASixteenMatchedMacroRelative\%.  The mixed per-data-set signs show modest
changes after independent training despite expressive equivalence.  They
motivate examining grouping and width separately in the following grid.

\begin{table}[h]
  \centering
  \caption{Atomic grouping and model-width sensitivity on four representative data
  sets.  Each entry is the three-seed mean $\pm$ sample standard deviation of
  MSE or MAE averaged over $H\in\{96,192,336,720\}$; lower is better and the
  best and second-best distinct means in each data-set/metric group are bold
  and underlined, respectively.  All 36 settings use
  $W=672$, full test origins, fixed P12--16 atoms, and depth two.}
  \label{tab:p12-width}
  \scriptsize
  \setlength{\tabcolsep}{1.8pt}
  \renewcommand{\arraystretch}{0.94}
  \input{generated/table_generated_tangent_p12_width.tex}
\end{table}
\FloatBarrier{}

Across the 12 data-set/$D$ groups, changing $m$ at fixed $D$ gives a mean
within-group normalized range $(\max-\min)/\operatorname{mean}$ of
\pTwelveMRangeMse\% MSE and \pTwelveMRangeMae\% MAE.  Across the 12
data-set/$m$ groups, changing $D$ at fixed $m$ gives
\pTwelveDRangeMse\% and \pTwelveDRangeMae\%, respectively.  The observed range
is smaller along $m$ and larger along $D$, but $D$ also changes backbone
capacity.  Its effect is data-set-specific: ETTm2 and Weather attain their
lowest MSE at $D=64$, while ECL and Traffic do so at $D=256$; width is not
monotonically beneficial.

Because $m=P/12$, changing $m$ also changes the inherited parent span $P$;
this is a parent-scale/atom-count sensitivity audit, not an isolated
atom-count intervention or a test-set model-selection rule.  In the starred
ETTm2 P24/$D=64$ row of \cref{tab:p12-matched}, exact Raw-to-P12--16
transplantation gives zero initial discrepancy, but independent training
changes H720 by $+1.226\%$ (4/12 test seed--block wins).  Preserving the raw
projection class therefore does not guarantee identical results after training.

\FloatBarrier{}

\paragraph{Frozen tangent drift geometry.}
\label{app:tangent-drift-geometry}

For diagnostic analysis only, let
$\Pi_{\mathrm{lev}}=\mathbf{1}\mathbf{1}^{\top}/n$, where
$\mathbf{1}\in\mathbb R^n$ is the all-ones vector on the $n$ analyzed time
coordinates, and decompose each channel's
later-patch displacement into
$d_{\mathrm{lev}}=\Pi_{\mathrm{lev}}(T_\tau-G)$ and
$d_{\mathrm{shape}}=(I-\Pi_{\mathrm{lev}})(T_\tau-G)$.  These coordinates are
orthogonal, sum to $T_\tau-G$, and are never fitted or deployed separately.

We first audit the paper's main local ATD structure.  Every
train-selected $\tau^*$ is reused without reselection, and only the held-forward
fourth training block is measured.  At H720,
\cref{tab:tangent-main-drift-geometry} reports later-patch coordinates,
excluding the protected first patch of each commit.  Level and shape cosines
are computed within their respective orthogonal residual coordinates;
``full--shuffle'' is $\cos(T_{\tau^*}-G,y-G)$ minus the mean cosine of eight
deterministic phase-permuted templates.

\begin{table}[h]
  \centering
  \caption{Frozen held-forward training-block-four H720 geometry for the main local
  ATD structure, averaged over three seeds.  Component signs vary by
  data set; these coordinates are a diagnostic view of the shared direction,
  not separately deployed correction mechanisms.}
  \label{tab:tangent-main-drift-geometry}
  \scriptsize
  \setlength{\tabcolsep}{1.5pt}
  \input{generated/table_generated_tangent_main_geometry.tex}
\end{table}

The complete direction is positively aligned in
\patchGeometryFullPositive/\patchGeometryCells{} seed runs, whereas the level and
shape components are individually positive in
\patchGeometryLevelPositive/\patchGeometryCells{} and
\patchGeometryShapePositive/\patchGeometryCells{} runs.  ECL/Traffic contain
level-negative runs, while ETTh2 is shape-negative in all six width--seed
runs.  This heterogeneity is why the decomposition is used as a view rather
than as two independently sufficient mechanisms.

We then repeat the geometry audit across the three named transfer parents.  For
ATD-2, \cref{tab:tangent-drift-geometry} compares native recursion depths 1 and 3
on later-patch coordinates under the same held-forward protocol.

\begin{table}[h]
  \centering
  \caption{Positive parent--data--seed runs out of 36, with mean explained
  error in parentheses, on the three-parent held-forward grid.  The combined
  direction is consistently stronger than either diagnostic coordinate.}
  \label{tab:tangent-component-risk}
  \footnotesize
  \renewcommand{\arraystretch}{1.08}
  \input{generated/table_generated_tangent_component_risk.tex}
\end{table}

\begin{table}[h]
  \centering
  \caption{Frozen held-forward training-block-four ATD-2 transfer-parent drift geometry, averaged over
  four data sets and three seeds per parent.  Native depths span 192 points for
  AutoTimes and Timer and 256 for TimesFM.}
  \label{tab:tangent-drift-geometry}
  \scriptsize
  \setlength{\tabcolsep}{1.5pt}
  \input{generated/table_generated_tangent_drift_geometry.tex}
\end{table}

The level fraction rises in
\mbox{\driftGeometryLevelIncreases/\driftGeometryDepthTrials} paired summaries, and the
phase-specific cosine margin rises in
\mbox{\driftGeometryMarginIncreases/\driftGeometryDepthTrials}.  Across all cumulative,
band, and depth summaries over later-patch coordinates, the mean-free shape direction has
positive residual alignment in
\mbox{\driftGeometryShapePositiveRows/\driftGeometryFarRows} rows.  At ATD-8 H720, the
true direction exceeds the largest shuffled control in
\mbox{\driftGeometryKEightHSevenTwentyAboveShuffle/\driftGeometryKEightHSevenTwentyTrials}
parent--data--seed runs; the
exception is AutoTimes--ETTh1 seed 2023.  Raw cosine itself rises from depth 1
to 3 in only
\mbox{\driftGeometryCosineIncreases/\driftGeometryDepthTrials} paired summaries, so
this audit supports repeatable drift geometry but does not support a universal
monotone-horizon claim.  These are overlapping descriptive summaries on
held-forward training block 4, not independent significance trials.

\paragraph{Complete tangent selection.}
\label{app:tangent-selection}

The representation-aligned candidate grid is
$\tau\in\{12,24,\ldots,216\}$: multiples of the fixed 12-sample atom, capped at
the largest multiple with at least three occurrences in $W=672$.  The initial
history has times $t=-W,\ldots,-1$ and normalized values
$u_{c,t}=(h_{0,c,t}-\mu_{0,c})/\sigma_{0,c}$.
For phase $r\in\{0,\ldots,\tau-1\}$, let
$I_r=\{t\in\{-W,\ldots,-1\}:t\bmod\tau=r\}$ and
$S_r=I_r\setminus\{\min I_r\}$.
The stored template is $v_{\tau,c,r}=|S_r|^{-1}\sum_{t\in S_r}u_{c,t}$;
it excludes the oldest occurrence of each phase and is never updated.
If $a$ patches have already been committed, local slot $j\in\{1,\ldots,K\}$
and offset $s\in\{0,\ldots,P-1\}$ correspond to
$t=(a+j-1)P+s$ and endpoint $\ell=(a+j)P$.
Using the current rolling history's mean and scale $\mu_{a,c},\sigma_{a,c}$,
the aligned data-space template in \cref{eq:tangent-direction} is
\begin{equation}
  (T_\tau)_{c,j,s}=\mu_{a,c}+\sigma_{a,c}\,v_{\tau,c,t\bmod\tau}.
  \label{eq:template-alignment}
\end{equation}
Thus $T_\tau$ and $G$ both have shape $C\times KP$ after concatenating slots.
The stored phase values stay fixed; only their data-space mapping follows the
current call.  Equivalently, the code subtracts normalized proposals from
$v_\tau$, then multiplies the displacement by $\sigma_a$ before computing
data-space moments or writing back the corrected patch.
The fixed endpoint ramp is the dimensionless scalar
\begin{equation}
 b(\ell)=\begin{cases}
 0.25,&0\leq\ell<96,\\
 0.5,&96\leq\ell<192,\\
 1,&192\leq\ell<336,\\
 2,&336\leq\ell<672,\\
 3,&672\leq\ell.
 \end{cases}
 \label{eq:endpoint-ramp}
\end{equation}
It applies uniformly to all $P$ points and all channels of that patch;
at a boundary the higher band applies.  Local slot $j=1$ is hard-zeroed in
every call, not only at the start of the complete forecast.
These boundaries follow the reporting horizons and lookback rather than a data-set search.
The implementation's fixed factor $\beta=0.0625$ is absorbed into the reported
$\alpha^*=\beta\gamma$; the tables report the effective coefficient, not $\gamma$.

For each data-set--width configuration, 512 uniformly spaced training origins from
each seed are divided chronologically into four contiguous 128-origin blocks.
Moments from seeds 2021--2023 and blocks 1--3 select one period and coefficient
shared by the three seed runs; block 4 confirms the frozen rule without
reselection.  The selector chooses the smallest period attaining the greatest
positive explained error $\max(B_\tau,0)^2/(A_\tau V)$ and then applies
\cref{eq:alpha}.  Thus reported seed runs have distinct trained parents but are
not independent selector refits.

For completeness, for a fixed proposal trajectory and $A_\tau>0$,
\begin{equation}
  R_\tau(\alpha)=\mathbb E\|G+\alpha d_\tau-y\|_2^2
  =R_\tau(0)-2\alpha B_\tau+\alpha^2 A_\tau.
  \label{eq:tangent-risk}
\end{equation}
This strictly convex quadratic has unconstrained minimizer $B_\tau/A_\tau$;
projection onto $\alpha\geq0$ gives \cref{eq:alpha}.  The result is exact for
the fixed training proposal trajectory, while closed-loop correction is tested
empirically because writeback changes later histories.

The ramp has one prescribed relative shape, not five fitted coefficients.  Its
global scale is analytically immaterial: for any $c>0$, replacing $d_\tau$ by
$c d_\tau$ gives $A'_\tau=c^2A_\tau$, $B'_\tau=cB_\tau$, and
$\alpha'^*=\alpha^*/c$.  The explained-risk selector and applied correction
$\alpha'^*c d_\tau=\alpha^*d_\tau$ are therefore unchanged; the five displayed
ramp values do not constitute five fitted coefficients.

\begin{table}[h]
  \centering
  \caption{Train-only Spectrum Tangent sensitivity over seven data sets,
  ATD-4/8, and three seeds.  Training blocks 1--3 reselect and fit each listed
  variant; block 4 confirms the frozen rule.  $R^2$ is equal-configuration
  held-forward explained error, and retained is relative to the fitted staged
  default.  No validation or test example is read.}
  \label{tab:tangent-sensitivity}
  \scriptsize
  \setlength{\tabcolsep}{1.8pt}
  \input{generated/table_generated_tangent_sensitivity.tex}
\end{table}

The linear ramp slightly exceeds the staged ramp on this diagnostic, while
both retain 14/14 configuration and 42/42 seed wins.  Thus the audit supports
increasing long-horizon weight but does not identify the staged breakpoints as
unique.  The flat ramp drops to 13/14 configurations and 38/42 seeds.  Both
alternative period grids retain 14/14 and 42/42, as do coefficient multipliers
from 0.50 through 1.25; at 1.50 the counts fall to 12/14 and 35/42.  These are
raw-trajectory held-forward training risks, not closed-loop test re-ranking.

\Cref{tab:dataset-config} lists the pooled local ATD-4/8 and named-parent
ATD-2/4/8 periods and effective coefficients.
Within each data-set--width configuration, training blocks 1--3 are pooled across the three seeds to
select one shared rule.  A no-pooling control instead refits period and
coefficient from each seed's blocks 1--3 and evaluates its block 4.  It yields
\tangentNoPoolingWins/\tangentNoPoolingTrials{} positive checks; all
\tangentNoPoolingPeriodMatches{} select the pooled $\tau^*$, and coefficients are
\tangentNoPoolingCoefficientRatioMin--\tangentNoPoolingCoefficientRatioMax$\times$
the pooled values.  A complementary leave-one-seed-out control fits on two
seeds and evaluates the excluded seed's block 4.  It likewise yields
\tangentLeaveSeedOutWins/\tangentLeaveSeedOutTrials{} positive checks and all
\tangentLeaveSeedOutPeriodMatches{} pooled periods; its coefficient is
\tangentLeaveSeedOutCoefficientRatioMin--\tangentLeaveSeedOutCoefficientRatioMax$\times$
the pooled value.  Thus three-seed pooling is not required for held-forward
selector stability; both controls are train-only and do not re-estimate test risk.

\paragraph{Weak-seasonality stress test on Exchange-Rate.}
Exchange is deliberately kept outside the seven-data-set mean and used as a
weak-seasonality stress test rather than another positive cell; we do not
assign the selected period a known physical-seasonality interpretation.  We
reuse the same P96/D64/L1 parent interface and three seeds,
train ATD-8 by the same validation-only selector as the other exits,
and then apply the unchanged Spectrum Tangent protocol independently as
Tangent-4 and Tangent-8.  Both widths select $\tau=12$, with
$\alpha^*=\exchangeTangentKFourAlpha{}$ and
$\exchangeTangentKEightAlpha{}$, and all
\exchangeTangentTrainConfirmations/6 held-forward training checks are
positive.

\begin{table*}[t]
  \centering
  \caption{Exchange-Rate stress test over all 798 test origins and three
  seeds.  Entries are mean$\pm$sample standard deviation; positive $\Delta$
  means the correction is worse.  The route is selected only from training
  blocks and is not changed after observing this table.}
  \label{tab:tangent-exchange}
  \scriptsize
  \setlength{\tabcolsep}{2.8pt}
  \input{generated/table_generated_tangent_exchange.tex}
\end{table*}

The preserved first proposal slot makes H96 identical, but the correction
worsens H720 MSE/MAE by
\exchangeTangentKFourHSevenTwentyMseRelative\%/
\exchangeTangentKFourHSevenTwentyMaeRelative\% for Tangent-4 and
\exchangeTangentKEightHSevenTwentyMseRelative\%/
\exchangeTangentKEightHSevenTwentyMaeRelative\% for Tangent-8.  It wins 0/6 H720
seed runs and only \exchangeTangentHSevenTwentyBlockWins/24 reporting blocks.
The gap between positive held-forward training geometry and negative test
risk is therefore retained as a distribution-shift counterexample: the
closed-form coefficient is exact for its fixed training trajectory, not a
guarantee that a history template transports across regimes.

\paragraph{Horizon and H720 configuration detail.}
\label{app:tangent-detail}

\begin{figure}[h]
  \centering
  \includegraphics[width=0.94\columnwidth]{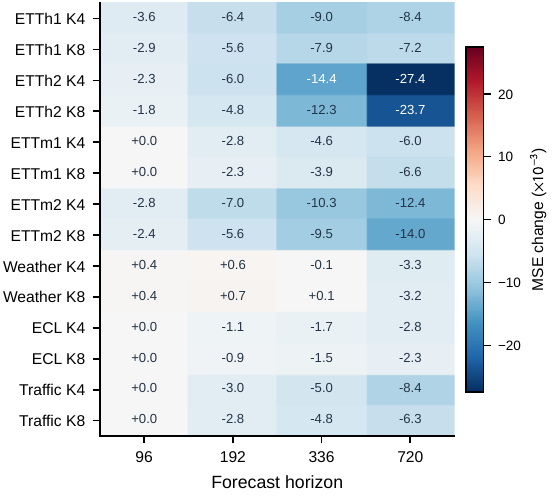}
  \caption{Per-configuration MSE change from \atd{} to Spectrum Tangent.  The
  correction is strongest and most consistent at H720; short-horizon effects
  are smaller and mixed.}
  \label{fig:tangent-boundary}
\end{figure}

\begin{table*}[t]
  \centering
  \caption{Detailed local H720 accounting.  Every row averages all test
  origins and three seeds.  Seed runs and test seed--blocks count
  strict improvements over the matched \atd{} model; negative
  $\Delta$ is better.}
  \label{tab:tangent-h720-internal}
  \scriptsize
  \setlength{\tabcolsep}{4.0pt}
  \input{generated/table_generated_tangent_h720_internal.tex}
\end{table*}

%% file: generated/table_generated_tangent_external_full.tex
\begin{tabular*}{\textwidth}{@{\extracolsep{\fill}}l*{18}{r}@{}}
\toprule
Models & \multicolumn{2}{c}{Timer-XL} & \multicolumn{2}{c}{Timer} & \multicolumn{2}{c}{UniTST} & \multicolumn{2}{c}{iTrans.} & \multicolumn{2}{c}{DLinear} & \multicolumn{2}{c}{PatchTST} & \multicolumn{2}{c}{TimesNet} & \multicolumn{2}{c}{Nonstat.} & \multicolumn{2}{c}{Autoformer} \\
 & \multicolumn{2}{c}{\scriptsize (\citeyear{liu2025timerxl})} & \multicolumn{2}{c}{\scriptsize (\citeyear{liu2024timer})} & \multicolumn{2}{c}{\scriptsize (\citeyear{liu2024unitst})} & \multicolumn{2}{c}{\scriptsize (\citeyear{liu2024itransformer})} & \multicolumn{2}{c}{\scriptsize (\citeyear{zeng2023dlinear})} & \multicolumn{2}{c}{\scriptsize (\citeyear{nie2023patchtst})} & \multicolumn{2}{c}{\scriptsize (\citeyear{wu2023timesnet})} & \multicolumn{2}{c}{\scriptsize (\citeyear{liu2022nonstationary})} & \multicolumn{2}{c}{\scriptsize (\citeyear{wu2021autoformer})} \\
Metric & MSE & MAE & MSE & MAE & MSE & MAE & MSE & MAE & MSE & MAE & MSE & MAE & MSE & MAE & MSE & MAE & MSE & MAE \\
\midrule
\multicolumn{19}{l}{\textit{ETTh1}} \\
96 & \textbf{0.364} & \textbf{0.397} & 0.371 & 0.404 & 0.379 & 0.415 & 0.387 & 0.418 & \underline{0.369} & \underline{0.400} & 0.373 & 0.403 & 0.452 & 0.463 & 0.452 & 0.478 & 0.467 & 0.499 \\
192 & \textbf{0.405} & \underline{0.424} & \underline{0.407} & 0.429 & 0.415 & 0.438 & 0.416 & 0.437 & \textbf{0.405} & \textbf{0.422} & \textbf{0.405} & 0.425 & 0.474 & 0.477 & 0.484 & 0.510 & 0.492 & 0.523 \\
336 & \underline{0.427} & \textbf{0.439} & 0.434 & 0.445 & 0.440 & 0.454 & 0.434 & 0.450 & 0.435 & 0.445 & \textbf{0.423} & \underline{0.440} & 0.493 & 0.489 & 0.511 & 0.522 & 0.519 & 0.531 \\
720 & \textbf{0.439} & \textbf{0.459} & 0.461 & \underline{0.466} & 0.482 & 0.482 & 0.447 & 0.473 & 0.493 & 0.508 & \underline{0.445} & 0.471 & 0.560 & 0.534 & 0.571 & 0.543 & 0.589 & 0.560 \\
\addlinespace[1pt]
Avg. & \textbf{0.409} & \textbf{0.430} & 0.418 & 0.436 & 0.429 & 0.447 & 0.421 & 0.445 & 0.426 & 0.444 & \underline{0.412} & \underline{0.435} & 0.495 & 0.491 & 0.505 & 0.513 & 0.517 & 0.528 \\
\midrule
\multicolumn{19}{l}{\textit{ETTh2}} \\
96 & \textbf{0.277} & \textbf{0.343} & \underline{0.285} & \underline{0.344} & 0.343 & 0.398 & 0.304 & 0.362 & 0.305 & 0.371 & 0.289 & 0.347 & 0.340 & 0.374 & 0.348 & 0.403 & 0.358 & 0.397 \\
192 & \textbf{0.348} & \textbf{0.391} & 0.365 & 0.400 & 0.376 & 0.420 & 0.372 & 0.407 & 0.412 & 0.439 & \underline{0.360} & \underline{0.393} & 0.402 & 0.414 & 0.408 & 0.448 & 0.435 & 0.451 \\
336 & \textbf{0.375} & \textbf{0.418} & 0.412 & 0.440 & 0.399 & 0.435 & 0.418 & 0.440 & 0.527 & 0.508 & \underline{0.389} & \underline{0.420} & 0.452 & 0.452 & 0.424 & 0.457 & 0.454 & 0.475 \\
720 & \underline{0.409} & 0.458 & 0.468 & 0.487 & 0.419 & \underline{0.457} & 0.463 & 0.476 & 0.830 & 0.653 & \textbf{0.398} & \textbf{0.440} & 0.462 & 0.468 & 0.448 & 0.476 & 0.479 & 0.492 \\
\addlinespace[1pt]
Avg. & \textbf{0.352} & \underline{0.402} & 0.382 & 0.418 & 0.384 & 0.428 & 0.389 & 0.421 & 0.518 & 0.493 & \underline{0.359} & \textbf{0.400} & 0.414 & 0.427 & 0.407 & 0.446 & 0.431 & 0.454 \\
\midrule
\multicolumn{19}{l}{\textit{ETTm1}} \\
96 & 0.290 & \underline{0.341} & \textbf{0.281} & \textbf{0.338} & 0.289 & 0.348 & 0.311 & 0.365 & 0.307 & 0.350 & \underline{0.285} & 0.346 & 0.338 & 0.375 & 0.414 & 0.414 & 0.466 & 0.466 \\
192 & 0.337 & \underline{0.369} & \underline{0.330} & \textbf{0.368} & 0.332 & 0.375 & 0.353 & 0.390 & 0.337 & \textbf{0.368} & \textbf{0.329} & 0.372 & 0.371 & 0.387 & 0.524 & 0.482 & 0.504 & 0.496 \\
336 & 0.374 & \underline{0.392} & 0.367 & 0.393 & \underline{0.365} & 0.397 & 0.387 & 0.411 & 0.366 & \textbf{0.387} & \textbf{0.363} & 0.394 & 0.410 & 0.411 & 0.541 & 0.497 & 0.574 & 0.530 \\
720 & 0.437 & 0.428 & 0.432 & 0.433 & \underline{0.421} & 0.431 & 0.452 & 0.445 & \textbf{0.419} & \textbf{0.419} & \underline{0.421} & \underline{0.426} & 0.478 & 0.450 & 0.578 & 0.509 & 0.596 & 0.558 \\
\addlinespace[1pt]
Avg. & 0.359 & \underline{0.382} & \underline{0.352} & 0.383 & \underline{0.352} & 0.388 & 0.376 & 0.403 & 0.357 & \textbf{0.381} & \textbf{0.349} & 0.385 & 0.399 & 0.406 & 0.514 & 0.475 & 0.535 & 0.512 \\
\midrule
\multicolumn{19}{l}{\textit{ETTm2}} \\
96 & 0.175 & \textbf{0.257} & 0.175 & \textbf{0.257} & \underline{0.171} & 0.260 & 0.183 & 0.272 & \textbf{0.167} & 0.263 & 0.172 & \underline{0.259} & 0.187 & 0.267 & 0.237 & 0.306 & 0.255 & 0.339 \\
192 & 0.242 & 0.301 & 0.239 & 0.301 & \textbf{0.228} & \textbf{0.230} & 0.250 & 0.315 & \underline{0.230} & 0.311 & 0.233 & \underline{0.299} & 0.249 & 0.309 & 0.330 & 0.387 & 0.279 & 0.335 \\
336 & 0.293 & 0.337 & 0.293 & 0.342 & \underline{0.282} & \underline{0.336} & 0.311 & 0.356 & 0.298 & 0.361 & \textbf{0.280} & \textbf{0.331} & 0.321 & 0.351 & 0.404 & 0.424 & 0.331 & 0.374 \\
720 & \underline{0.376} & \underline{0.390} & 0.392 & 0.407 & 0.380 & 0.398 & 0.417 & 0.419 & 0.432 & 0.446 & \textbf{0.357} & \textbf{0.382} & 0.497 & 0.403 & 0.525 & 0.486 & 0.413 & 0.450 \\
\addlinespace[1pt]
Avg. & 0.271 & 0.322 & 0.275 & 0.327 & \underline{0.265} & \textbf{0.306} & 0.290 & 0.340 & 0.282 & 0.345 & \textbf{0.261} & \underline{0.318} & 0.314 & 0.333 & 0.374 & 0.401 & 0.320 & 0.374 \\
\midrule
\multicolumn{19}{l}{\textit{ECL}} \\
96 & \textbf{0.127} & \textbf{0.219} & \underline{0.129} & \underline{0.221} & 0.130 & 0.225 & 0.133 & 0.229 & 0.138 & 0.238 & 0.132 & 0.232 & 0.184 & 0.288 & 0.185 & 0.287 & 0.256 & 0.357 \\
192 & \textbf{0.145} & \textbf{0.236} & \underline{0.148} & \underline{0.239} & 0.150 & 0.244 & 0.158 & 0.258 & 0.152 & 0.251 & 0.151 & 0.250 & 0.192 & 0.295 & 0.282 & 0.368 & 0.291 & 0.376 \\
336 & \textbf{0.159} & \textbf{0.252} & \underline{0.164} & \underline{0.256} & 0.166 & 0.262 & 0.168 & 0.262 & 0.167 & 0.268 & 0.171 & 0.272 & 0.200 & 0.303 & 0.289 & 0.377 & 0.290 & 0.379 \\
720 & \textbf{0.187} & \textbf{0.277} & \underline{0.201} & \underline{0.289} & 0.206 & 0.297 & 0.205 & 0.294 & 0.203 & 0.302 & 0.222 & 0.318 & 0.228 & 0.325 & 0.305 & 0.399 & 0.320 & 0.403 \\
\addlinespace[1pt]
Avg. & \textbf{0.155} & \textbf{0.246} & \underline{0.161} & \underline{0.251} & 0.163 & 0.257 & 0.164 & 0.258 & 0.165 & 0.265 & 0.169 & 0.268 & 0.201 & 0.303 & 0.265 & 0.358 & 0.289 & 0.379 \\
\midrule
\multicolumn{19}{l}{\textit{Traffic}} \\
96 & \textbf{0.340} & \textbf{0.238} & \underline{0.348} & \underline{0.240} & 0.359 & 0.250 & 0.353 & 0.259 & 0.399 & 0.285 & 0.359 & 0.255 & 0.593 & 0.315 & 0.610 & 0.322 & 0.675 & 0.412 \\
192 & \textbf{0.360} & \textbf{0.247} & \underline{0.369} & \underline{0.250} & 0.373 & 0.257 & 0.373 & 0.267 & 0.409 & 0.290 & 0.377 & 0.265 & 0.596 & 0.317 & 0.626 & 0.346 & 0.679 & 0.423 \\
336 & \textbf{0.377} & \textbf{0.256} & 0.388 & \underline{0.260} & \underline{0.386} & 0.265 & \underline{0.386} & 0.275 & 0.422 & 0.297 & 0.393 & 0.276 & 0.600 & 0.319 & 0.633 & 0.352 & 0.688 & 0.440 \\
720 & \textbf{0.418} & \textbf{0.279} & 0.431 & \underline{0.285} & \underline{0.421} & 0.286 & 0.425 & 0.296 & 0.461 & 0.319 & 0.436 & 0.305 & 0.619 & 0.335 & 0.651 & 0.366 & 0.693 & 0.457 \\
\addlinespace[1pt]
Avg. & \textbf{0.374} & \textbf{0.255} & \underline{0.384} & \underline{0.259} & 0.385 & 0.265 & \underline{0.384} & 0.274 & 0.423 & 0.298 & 0.391 & 0.275 & 0.602 & 0.322 & 0.630 & 0.347 & 0.684 & 0.433 \\
\midrule
\multicolumn{19}{l}{\textit{Weather}} \\
96 & 0.157 & \underline{0.205} & \underline{0.151} & \textbf{0.202} & 0.152 & 0.206 & 0.174 & 0.225 & 0.169 & 0.229 & \textbf{0.149} & \textbf{0.202} & 0.169 & 0.228 & 0.185 & 0.241 & 0.355 & 0.409 \\
192 & 0.206 & 0.250 & \underline{0.196} & \textbf{0.245} & 0.198 & \underline{0.249} & 0.227 & 0.268 & 0.211 & 0.268 & \textbf{0.194} & \textbf{0.245} & 0.222 & 0.269 & 0.286 & 0.325 & 0.421 & 0.450 \\
336 & 0.259 & 0.291 & \underline{0.249} & \underline{0.288} & 0.251 & 0.291 & 0.290 & 0.309 & 0.258 & 0.306 & \textbf{0.244} & \textbf{0.285} & 0.290 & 0.310 & 0.323 & 0.347 & 0.452 & 0.465 \\
720 & 0.337 & 0.344 & 0.330 & 0.344 & 0.322 & \underline{0.340} & 0.374 & 0.360 & \underline{0.320} & 0.362 & \textbf{0.317} & \textbf{0.338} & 0.376 & 0.364 & 0.436 & 0.401 & 0.513 & 0.496 \\
\addlinespace[1pt]
Avg. & 0.240 & 0.273 & 0.232 & \underline{0.270} & \underline{0.231} & 0.272 & 0.266 & 0.291 & 0.239 & 0.291 & \textbf{0.226} & \textbf{0.268} & 0.264 & 0.293 & 0.308 & 0.329 & 0.435 & 0.455 \\
\bottomrule
\end{tabular*}

%% file: generated/table_generated_tangent_local_full.tex
\begin{tabular*}{\textwidth}{@{\extracolsep{\fill}}l*{14}{r}@{}}
\toprule
Models & \multicolumn{2}{c}{Parent} & \multicolumn{2}{c}{Joint-4} & \multicolumn{2}{c}{ATD-4} & \multicolumn{2}{c}{+Tangent-4} & \multicolumn{2}{c}{Joint-8} & \multicolumn{2}{c}{ATD-8} & \multicolumn{2}{c}{+Tangent-8} \\
Horizon & MSE & MAE & MSE & MAE & MSE & MAE & MSE & MAE & MSE & MAE & MSE & MAE & MSE & MAE \\
\midrule
\multicolumn{15}{l}{\textit{ETTh1}} \\
96 & 0.326 & 0.375 & 0.349 & 0.386 & 0.325 & 0.374 & \textbf{0.321} & \textbf{0.370} & 0.360 & 0.393 & 0.325 & 0.374 & \underline{0.322} & \underline{0.371} \\
192 & 0.350 & 0.393 & 0.385 & 0.409 & 0.348 & 0.392 & \textbf{0.342} & \textbf{0.386} & 0.400 & 0.417 & 0.350 & 0.392 & \underline{0.345} & \underline{0.387} \\
336 & 0.363 & 0.408 & 0.414 & 0.430 & 0.361 & 0.406 & \textbf{0.352} & \textbf{0.397} & 0.436 & 0.443 & 0.362 & 0.405 & \underline{0.354} & \underline{0.398} \\
720 & 0.408 & 0.443 & 0.487 & 0.483 & 0.405 & 0.440 & \textbf{0.397} & \textbf{0.433} & 0.522 & 0.502 & 0.405 & 0.439 & \underline{0.398} & \underline{0.434} \\
\addlinespace[1pt]
Avg. & 0.362 & 0.405 & 0.409 & 0.427 & 0.360 & 0.403 & \textbf{0.353} & \textbf{0.396} & 0.430 & 0.439 & 0.361 & 0.403 & \underline{0.355} & \underline{0.398} \\
\midrule
\multicolumn{15}{l}{\textit{ETTh2}} \\
96 & \underline{0.235} & \underline{0.312} & 0.296 & 0.364 & \underline{0.235} & \underline{0.312} & \textbf{0.233} & \textbf{0.311} & 0.296 & 0.372 & \underline{0.235} & \underline{0.312} & \textbf{0.233} & \underline{0.312} \\
192 & 0.279 & \underline{0.348} & 0.321 & 0.382 & 0.282 & 0.351 & \textbf{0.276} & \textbf{0.347} & 0.321 & 0.390 & 0.282 & 0.350 & \underline{0.277} & \underline{0.348} \\
336 & \underline{0.313} & \underline{0.380} & 0.348 & 0.402 & 0.322 & 0.386 & \textbf{0.308} & \textbf{0.376} & 0.343 & 0.407 & 0.326 & 0.388 & 0.314 & \underline{0.380} \\
720 & 0.402 & 0.445 & 0.463 & 0.465 & 0.419 & 0.455 & \textbf{0.392} & \textbf{0.435} & 0.433 & 0.461 & 0.421 & 0.455 & \underline{0.397} & \underline{0.439} \\
\addlinespace[1pt]
Avg. & 0.307 & 0.371 & 0.357 & 0.403 & 0.314 & 0.376 & \textbf{0.302} & \textbf{0.367} & 0.348 & 0.408 & 0.316 & 0.376 & \underline{0.305} & \underline{0.370} \\
\midrule
\multicolumn{15}{l}{\textit{ETTm1}} \\
96 & \textbf{0.272} & \textbf{0.339} & \underline{0.297} & \underline{0.360} & \textbf{0.272} & \textbf{0.339} & \textbf{0.272} & \textbf{0.339} & 0.319 & 0.376 & \textbf{0.272} & \textbf{0.339} & \textbf{0.272} & \textbf{0.339} \\
192 & 0.321 & 0.368 & 0.326 & 0.378 & 0.320 & 0.367 & \underline{0.317} & \textbf{0.364} & 0.342 & 0.391 & 0.319 & \underline{0.366} & \textbf{0.316} & \textbf{0.364} \\
336 & 0.363 & 0.391 & 0.360 & 0.396 & 0.360 & 0.390 & \textbf{0.355} & \textbf{0.384} & 0.371 & 0.406 & \underline{0.359} & \underline{0.389} & \textbf{0.355} & \textbf{0.384} \\
720 & 0.426 & 0.426 & 0.417 & 0.428 & 0.421 & 0.424 & \underline{0.415} & \underline{0.415} & 0.422 & 0.432 & 0.419 & 0.422 & \textbf{0.412} & \textbf{0.413} \\
\addlinespace[1pt]
Avg. & 0.346 & 0.381 & 0.350 & 0.391 & 0.343 & 0.380 & \underline{0.340} & \textbf{0.375} & 0.363 & 0.401 & 0.342 & \underline{0.379} & \textbf{0.339} & \textbf{0.375} \\
\midrule
\multicolumn{15}{l}{\textit{ETTm2}} \\
96 & \underline{0.182} & \textbf{0.263} & \textbf{0.181} & 0.265 & 0.184 & 0.266 & \textbf{0.181} & \underline{0.264} & 0.190 & 0.270 & 0.184 & 0.266 & \textbf{0.181} & \underline{0.264} \\
192 & 0.245 & 0.306 & \underline{0.242} & 0.306 & 0.244 & 0.306 & \textbf{0.237} & \textbf{0.302} & 0.252 & 0.311 & 0.243 & \underline{0.305} & \textbf{0.237} & \textbf{0.302} \\
336 & 0.300 & 0.343 & 0.293 & 0.342 & 0.297 & 0.342 & \underline{0.287} & \underline{0.334} & 0.303 & 0.346 & 0.296 & 0.340 & \textbf{0.286} & \textbf{0.333} \\
720 & 0.399 & 0.409 & \underline{0.374} & 0.398 & 0.388 & 0.401 & 0.376 & \underline{0.387} & 0.381 & 0.400 & 0.387 & 0.399 & \textbf{0.373} & \textbf{0.386} \\
\addlinespace[1pt]
Avg. & 0.282 & 0.330 & 0.273 & 0.328 & 0.278 & 0.329 & \underline{0.270} & \underline{0.322} & 0.281 & 0.332 & 0.277 & 0.328 & \textbf{0.269} & \textbf{0.321} \\
\midrule
\multicolumn{15}{l}{\textit{ECL}} \\
96 & \textbf{0.116} & \textbf{0.216} & \underline{0.129} & \underline{0.234} & \textbf{0.116} & \textbf{0.216} & \textbf{0.116} & \textbf{0.216} & 0.137 & 0.245 & \textbf{0.116} & \textbf{0.216} & \textbf{0.116} & \textbf{0.216} \\
192 & \textbf{0.136} & \underline{0.234} & 0.146 & 0.248 & \underline{0.137} & \underline{0.234} & \textbf{0.136} & \textbf{0.233} & 0.153 & 0.258 & \underline{0.137} & \underline{0.234} & \textbf{0.136} & \textbf{0.233} \\
336 & 0.157 & \underline{0.253} & 0.165 & 0.265 & 0.157 & \underline{0.253} & \textbf{0.155} & \textbf{0.252} & 0.172 & 0.274 & 0.157 & \underline{0.253} & \underline{0.156} & \textbf{0.252} \\
720 & \underline{0.201} & \underline{0.289} & 0.209 & 0.301 & 0.202 & 0.290 & \textbf{0.199} & \textbf{0.287} & 0.213 & 0.305 & 0.202 & 0.290 & \textbf{0.199} & \textbf{0.287} \\
\addlinespace[1pt]
Avg. & \underline{0.153} & \underline{0.248} & 0.162 & 0.262 & \underline{0.153} & \underline{0.248} & \textbf{0.152} & \textbf{0.247} & 0.169 & 0.271 & \underline{0.153} & \underline{0.248} & \textbf{0.152} & \textbf{0.247} \\
\midrule
\multicolumn{15}{l}{\textit{Traffic}} \\
96 & \textbf{0.348} & \textbf{0.242} & \underline{0.353} & \underline{0.247} & \textbf{0.348} & \textbf{0.242} & \textbf{0.348} & \textbf{0.242} & 0.355 & 0.249 & \textbf{0.348} & \textbf{0.242} & \textbf{0.348} & \textbf{0.242} \\
192 & \textbf{0.366} & \textbf{0.252} & \underline{0.367} & \underline{0.254} & 0.369 & \underline{0.254} & \textbf{0.366} & \textbf{0.252} & 0.369 & 0.255 & 0.369 & 0.255 & \textbf{0.366} & \textbf{0.252} \\
336 & 0.386 & 0.265 & 0.385 & \underline{0.263} & 0.388 & 0.267 & \textbf{0.383} & \textbf{0.262} & 0.385 & 0.265 & 0.388 & 0.267 & \underline{0.384} & \underline{0.263} \\
720 & 0.436 & 0.293 & \underline{0.431} & 0.289 & 0.438 & 0.295 & \textbf{0.430} & \textbf{0.286} & \underline{0.431} & 0.288 & 0.437 & 0.295 & \underline{0.431} & \underline{0.287} \\
\addlinespace[1pt]
Avg. & \underline{0.384} & 0.263 & \underline{0.384} & 0.263 & 0.386 & 0.265 & \textbf{0.382} & \textbf{0.260} & 0.385 & 0.264 & 0.386 & 0.265 & \textbf{0.382} & \underline{0.261} \\
\midrule
\multicolumn{15}{l}{\textit{Weather}} \\
96 & \textbf{0.154} & \textbf{0.204} & 0.173 & 0.238 & \underline{0.156} & \underline{0.206} & \underline{0.156} & \underline{0.206} & 0.208 & 0.277 & \underline{0.156} & \underline{0.206} & \underline{0.156} & \underline{0.206} \\
192 & \underline{0.201} & \underline{0.250} & 0.215 & 0.276 & \textbf{0.200} & \textbf{0.249} & \underline{0.201} & \underline{0.250} & 0.246 & 0.307 & \underline{0.201} & \underline{0.250} & \underline{0.201} & \underline{0.250} \\
336 & \underline{0.254} & \underline{0.290} & 0.269 & 0.316 & \textbf{0.252} & \underline{0.290} & \textbf{0.252} & \textbf{0.289} & 0.292 & 0.339 & \textbf{0.252} & \underline{0.290} & \textbf{0.252} & \underline{0.290} \\
720 & 0.330 & \underline{0.342} & 0.344 & 0.364 & 0.328 & \underline{0.342} & \textbf{0.324} & \textbf{0.339} & 0.380 & 0.393 & 0.328 & \underline{0.342} & \underline{0.325} & \textbf{0.339} \\
\addlinespace[1pt]
Avg. & 0.235 & \underline{0.272} & 0.250 & 0.299 & \underline{0.234} & \underline{0.272} & \textbf{0.233} & \textbf{0.271} & 0.281 & 0.329 & \underline{0.234} & \underline{0.272} & \underline{0.234} & \textbf{0.271} \\
\bottomrule
\end{tabular*}

%% file: generated/table_generated_tangent_joint_stability.tex
\begin{tabular*}{\textwidth}{@{\extracolsep{\fill}}ll*{4}{r}@{}}
\toprule
Method & Metric & ETTh1 & ETTh2 & ETTm1 & ETTm2 \\
\midrule
Joint Direct-4 & MSE & $0.4090\!\pm\!0.0162$ & $0.3571\!\pm\!0.0108$ & $0.3499\!\pm\!0.0002$ & $0.2728\!\pm\!0.0082$ \\
 & MAE & $0.4270\!\pm\!0.0097$ & $0.4032\!\pm\!0.0015$ & $0.3906\!\pm\!0.0019$ & $0.3277\!\pm\!0.0045$ \\
\addlinespace[1pt]
Joint Direct-8 & MSE & $0.4295\!\pm\!0.0291$ & $0.3483\!\pm\!0.0124$ & $0.3635\!\pm\!0.0015$ & $0.2812\!\pm\!0.0079$ \\
 & MAE & $0.4389\!\pm\!0.0123$ & $0.4076\!\pm\!0.0117$ & $0.4010\!\pm\!0.0015$ & $0.3319\!\pm\!0.0048$ \\
\addlinespace[1pt]
ATD-4 & MSE & $0.3598\!\pm\!0.0013$ & $0.3144\!\pm\!0.0056$ & $0.3432\!\pm\!0.0001$ & $0.2785\!\pm\!0.0006$ \\
 & MAE & $0.4029\!\pm\!0.0008$ & $0.3760\!\pm\!0.0036$ & $0.3799\!\pm\!0.0003$ & $0.3288\!\pm\!0.0022$ \\
\addlinespace[1pt]
ATD-8 & MSE & $0.3606\!\pm\!0.0016$ & $0.3161\!\pm\!0.0048$ & $0.3421\!\pm\!0.0002$ & $0.2773\!\pm\!0.0012$ \\
 & MAE & $0.4028\!\pm\!0.0013$ & $0.3765\!\pm\!0.0028$ & $0.3790\!\pm\!0.0005$ & $0.3277\!\pm\!0.0026$ \\
\bottomrule
\end{tabular*}
\par\medskip
\begin{tabular*}{\textwidth}{@{\extracolsep{\fill}}ll*{4}{r}@{}}
\toprule
Method & Metric & ECL & Traffic & Weather & Avg. \\
\midrule
Joint Direct-4 & MSE & $0.1622\!\pm\!0.0021$ & $0.3840\!\pm\!0.0050$ & $0.2501\!\pm\!0.0062$ & $0.3121\!\pm\!0.0069$ \\
 & MAE & $0.2621\!\pm\!0.0010$ & $0.2635\!\pm\!0.0011$ & $0.2987\!\pm\!0.0025$ & $0.3390\!\pm\!0.0032$ \\
\addlinespace[1pt]
Joint Direct-8 & MSE & $0.1687\!\pm\!0.0010$ & $0.3849\!\pm\!0.0073$ & $0.2814\!\pm\!0.0277$ & $0.3225\!\pm\!0.0124$ \\
 & MAE & $0.2706\!\pm\!0.0012$ & $0.2643\!\pm\!0.0026$ & $0.3289\!\pm\!0.0270$ & $0.3490\!\pm\!0.0087$ \\
\addlinespace[1pt]
ATD-4 & MSE & $0.1530\!\pm\!0.0002$ & $0.3859\!\pm\!0.0014$ & $0.2340\!\pm\!0.0023$ & $0.2955\!\pm\!0.0016$ \\
 & MAE & $0.2483\!\pm\!0.0002$ & $0.2645\!\pm\!0.0009$ & $0.2719\!\pm\!0.0023$ & $0.3246\!\pm\!0.0015$ \\
\addlinespace[1pt]
ATD-8 & MSE & $0.1529\!\pm\!0.0003$ & $0.3856\!\pm\!0.0013$ & $0.2342\!\pm\!0.0023$ & $0.2955\!\pm\!0.0017$ \\
 & MAE & $0.2482\!\pm\!0.0004$ & $0.2646\!\pm\!0.0008$ & $0.2719\!\pm\!0.0023$ & $0.3244\!\pm\!0.0015$ \\
\bottomrule
\end{tabular*}

%% file: generated/table_generated_tangent_dataset_config.tex
\begin{tabular*}{\linewidth}{@{\extracolsep{\fill}}lrrrrrr@{}}
\toprule
Dataset & $P$ & $D$ & $L$ & $\tau^*$ & $\alpha^*_{4}$ & $\alpha^*_{8}$ \\
\midrule
ETTh1 & 24 & 64 & 1 & 24 & 0.4815 & 0.3997 \\
ETTh2 & 48 & 64 & 1 & 24 & 0.2828 & 0.2141 \\
ETTm1 & 96 & 64 & 1 & 96 & 0.1827 & 0.1592 \\
ETTm2 & 24 & 64 & 2 & 96 & 0.3977 & 0.3459 \\
Weather & 48 & 64 & 2 & 12 & 0.0851 & 0.0856 \\
ECL & 96 & 256 & 2 & 168 & 0.1133 & 0.0907 \\
Traffic & 96 & 256 & 2 & 168 & 0.1235 & 0.1114 \\
\bottomrule
\end{tabular*}

%% file: generated/table_generated_tangent_named_config.tex
\begin{tabular*}{\linewidth}{@{\extracolsep{\fill}}llrrrrrr@{}}
\toprule
Parent & Dataset & $\tau^*_2$ & $\alpha^*_2$ & $\tau^*_4$ & $\alpha^*_4$ & $\tau^*_8$ & $\alpha^*_8$ \\
\midrule
AutoTimes & ETTh1 & 24 & 0.1799 & 24 & 0.1152 & 36 & 0.0424 \\
AutoTimes & ETTh2 & 24 & 0.5033 & 24 & 0.3760 & 24 & 0.2347 \\
AutoTimes & ETTm1 & 96 & 0.2341 & 96 & 0.2051 & 96 & 0.1903 \\
AutoTimes & Weather & 144 & 0.3473 & 144 & 0.3104 & 144 & 0.2777 \\
\midrule
Timer & ETTh1 & 24 & 0.1626 & 24 & 0.1502 & 24 & 0.1596 \\
Timer & ETTh2 & 24 & 0.2413 & 24 & 0.2423 & 24 & 0.2599 \\
Timer & ETTm1 & 96 & 0.3327 & 96 & 0.3050 & 96 & 0.3081 \\
Timer & Weather & 144 & 0.3037 & 144 & 0.2799 & 144 & 0.2839 \\
\midrule
TimesFM & ETTh1 & 24 & 0.1917 & 24 & 0.1818 & 24 & 0.1935 \\
TimesFM & ETTh2 & 24 & 0.2389 & 24 & 0.2236 & 24 & 0.2302 \\
TimesFM & ETTm1 & 96 & 0.2931 & 96 & 0.3018 & 96 & 0.3123 \\
TimesFM & Weather & 144 & 0.2993 & 144 & 0.3015 & 144 & 0.3107 \\
\bottomrule
\end{tabular*}

%% file: generated/table_generated_tangent_target_fit.tex
\begin{tabular*}{\textwidth}{@{\extracolsep{\fill}}lrrrrr@{}}
\toprule
& Energy & \multicolumn{2}{c}{Unexplained} & \multicolumn{2}{c}{Parent easier} \\
\cmidrule(lr){3-4}\cmidrule(l){5-6}
Parent / split & Parent/future & Parent & Future & Runs & Seed--blocks \\
\midrule
Local parent / train & 0.150$\times$ & \textbf{0.109} & 0.845 & 42/42 & 168/168 \\
TimesFM / val. & 0.139$\times$ & \textbf{0.126} & 0.861 & 12/12 & 48/48 \\
\bottomrule
\end{tabular*}

%% file: generated/table_generated_tangent_target_control_raw.tex
\begin{tabular*}{\linewidth}{@{\extracolsep{\fill}}lrrrrrr@{}}
\toprule
& \multicolumn{2}{c}{Forecast MSE} & \multicolumn{2}{c}{Parent-traj. MSE} & \multicolumn{2}{c}{Wins} \\
Data & Direct & ATD & Direct & ATD & Forecast & Fidel. \\
\midrule
ETTh1 & 0.4410 & 0.4052 & 0.0661 & 0.0098 & 3/3 & 3/3 \\
ETTh2 & 2.3903 & 0.4208 & 1.7784 & 0.0104 & 3/3 & 3/3 \\
ETTm1 & 0.4261 & 0.4186 & 0.0488 & 0.0145 & 3/3 & 3/3 \\
ETTm2 & 0.3736 & 0.3865 & 0.0613 & 0.0332 & 0/3 & 3/3 \\
Weather & 0.3356 & 0.3278 & 0.0385 & 0.0116 & 3/3 & 3/3 \\
ECL & 0.2063 & 0.2017 & 0.0187 & 0.0038 & 3/3 & 3/3 \\
Traffic & 0.4328 & 0.4368 & 0.0305 & 0.0094 & 0/3 & 3/3 \\
\bottomrule
\end{tabular*}

%% file: generated/table_generated_tangent_stride.tex
\begin{tabular*}{\linewidth}{@{\extracolsep{\fill}}llclc@{}}
\toprule
Method & Reported/local scope & Target online? & Accuracy report & Speedup \\
\midrule
STRIDE$^\dagger$ & Timer-XL, ECL, H720, B1, width 3 & verify & MSE 0.200$\to$0.240 & 1.82$\times$ \\
STRIDE$^\dagger$ & Timer-XL, Weather, H720, B1, width 3 & verify & MSE 0.320$\to$0.362 & 1.42$\times$ \\
\midrule
\textbf{ATD-8} & local H720, B1 & no & 7-data MSE 0.3720$\to$0.3711 & \textbf{5.54$\times$} \\
\textbf{+Tangent} & local H720, B1 & no & 7-data MSE 0.3711$\to$0.3620 & 3.24$\times$ \\
\bottomrule
\end{tabular*}

%% file: generated/table_generated_tangent_named_parents.tex
\begin{tabular*}{\linewidth}{@{\extracolsep{\fill}}llrrrrrr@{}}
\toprule
& & & ATD/parent & \multicolumn{2}{c}{Tangent} & \multicolumn{2}{c}{ATD/Direct} \\
\cmidrule(lr){4-4}\cmidrule(lr){5-6}\cmidrule(lr){7-8}
Parent & Data & ATD-1 exact & Forecast wins & $\Delta$ (\%) & Wins & Forecast wins & Fidel. wins \\
\midrule
AutoTimes & ETTh1 & 3/3 & 0/3 & -1.56 & 3/3 & 3/3 & 3/3 \\
 & ETTh2 & 3/3 & 2/3 & -6.14 & 3/3 & 3/3 & 3/3 \\
 & ETTm1 & 3/3 & 3/3 & -0.84 & 3/3 & 3/3 & 3/3 \\
 & Weather & 3/3 & 3/3 & -3.07 & 3/3 & 3/3 & 3/3 \\
\midrule
Timer & ETTh1 & 3/3 & 3/3 & -1.93 & 3/3 & 3/3 & 3/3 \\
 & ETTh2 & 3/3 & 0/3 & -4.19 & 3/3 & 3/3 & 3/3 \\
 & ETTm1 & 3/3 & 3/3 & -15.66 & 3/3 & 0/3 & 3/3 \\
 & Weather & 3/3 & 3/3 & -5.83 & 3/3 & 0/3 & 3/3 \\
\midrule
TimesFM & ETTh1 & 3/3 & 3/3 & -2.04 & 3/3 & 3/3 & 3/3 \\
 & ETTh2 & 3/3 & 3/3 & -6.84 & 3/3 & 3/3 & 3/3 \\
 & ETTm1 & 3/3 & 3/3 & -5.06 & 3/3 & 3/3 & 3/3 \\
 & Weather & 3/3 & 3/3 & -12.32 & 3/3 & 0/3 & 3/3 \\
\bottomrule
\end{tabular*}

%% file: generated/table_generated_tangent_transfer_stability.tex
\begin{tabular*}{\linewidth}{@{\extracolsep{\fill}}lrrrr@{}}
\toprule
ATD width & AutoTimes & Timer & TimesFM & \textbf{Total} \\
\midrule
ATD-2 & 38/48 & 33/48 & 48/48 & \textbf{119/144} \\
ATD-4 & 40/48 & 33/48 & 48/48 & \textbf{121/144} \\
ATD-8 & 44/48 & 45/48 & 48/48 & \textbf{137/144} \\
\bottomrule
\end{tabular*}

%% file: generated/table_generated_tangent_plan_a16.tex
\begin{tabular*}{\linewidth}{@{\extracolsep{\fill}}lrrrrr@{}}
\toprule
Dataset & \multicolumn{3}{c}{P12--16 interface} & Parent & Full rank \\
\cmidrule(lr){2-4}\cmidrule(lr){5-5}\cmidrule(lr){6-6}
& $P$ & $m=P/12$ & concat. $16m$ & $D$ & $\operatorname{rank}(\phi)=12$ \\
\midrule
ETTh1 & 24 & 2 & 32 & 64 & 3/3 \\
ETTh2 & 48 & 4 & 64 & 64 & 3/3 \\
ETTm1 & 96 & 8 & 128 & 64 & 3/3 \\
ETTm2 & 24 & 2 & 32 & 64 & 3/3 \\
Weather & 48 & 4 & 64 & 64 & 3/3 \\
ECL & 96 & 8 & 128 & 256 & 3/3 \\
Traffic & 96 & 8 & 128 & 256 & 3/3 \\
\bottomrule
\end{tabular*}

%% file: generated/table_generated_tangent_plan_a16_matched.tex
\begin{tabular*}{\linewidth}{@{\extracolsep{\fill}}lrrllr@{}}
\toprule
Dataset & \multicolumn{2}{c}{Architecture} & \multicolumn{2}{c}{H720 MSE} & Paired $\Delta$ \\
\cmidrule(lr){2-3}\cmidrule(lr){4-5}\cmidrule(lr){6-6}
& $P$ & $D$ & Raw & P12--16 & (\%) \\
\midrule
ETTh1 & 24 & 64 & $\mathbf{0.4079}\!\pm\!0.0037$ & $0.4084\!\pm\!0.0012$ & +0.15 \\
ETTh2 & 48 & 64 & $0.4174\!\pm\!0.0138$ & $\mathbf{0.4025}\!\pm\!0.0087$ & -3.46 \\
ETTm1 & 96 & 64 & $0.4292\!\pm\!0.0012$ & $\mathbf{0.4264}\!\pm\!0.0011$ & -0.64 \\
ETTm2$^\ast$ & 24 & 64 & $\mathbf{0.3858}\!\pm\!0.0029$ & $0.3905\!\pm\!0.0031$ & +1.23 \\
Weather & 48 & 64 & $\mathbf{0.3278}\!\pm\!0.0050$ & $0.3301\!\pm\!0.0050$ & +0.72 \\
\bottomrule
\end{tabular*}

%% file: generated/table_generated_tangent_p12_width.tex
\begin{tabular*}{\linewidth}{@{\extracolsep{\fill}}lrlrrr@{}}
\toprule
Dataset & Atoms & Metric & \multicolumn{3}{c}{Model width $D$} \\
\cmidrule(lr){4-6}
& $m$ & & 64 & 128 & 256 \\
\midrule
ETTm2 & 2 & MSE & $0.2816\!\pm\!0.0014$ & $0.2769\!\pm\!0.0058$ & $0.2999\!\pm\!0.0429$ \\
 &  & MAE & $0.3302\!\pm\!0.0020$ & $0.3250\!\pm\!0.0015$ & $0.3367\!\pm\!0.0221$ \\
 & 4 & MSE & $\mathbf{0.2764}\!\pm\!0.0039$ & $\underline{0.2765}\!\pm\!0.0031$ & $0.2854\!\pm\!0.0026$ \\
 &  & MAE & $\mathbf{0.3231}\!\pm\!0.0029$ & $\underline{0.3242}\!\pm\!0.0021$ & $0.3290\!\pm\!0.0014$ \\
 & 8 & MSE & $0.2835\!\pm\!0.0034$ & $0.2844\!\pm\!0.0031$ & $0.2862\!\pm\!0.0065$ \\
 &  & MAE & $0.3257\!\pm\!0.0012$ & $0.3276\!\pm\!0.0025$ & $0.3295\!\pm\!0.0046$ \\
\midrule
Weather & 2 & MSE & $\mathbf{0.2333}\!\pm\!0.0007$ & $0.2508\!\pm\!0.0248$ & $0.2408\!\pm\!0.0062$ \\
 &  & MAE & $\underline{0.2708}\!\pm\!0.0006$ & $0.2842\!\pm\!0.0219$ & $0.2750\!\pm\!0.0079$ \\
 & 4 & MSE & $0.2348\!\pm\!0.0032$ & $0.2373\!\pm\!0.0043$ & $0.2497\!\pm\!0.0127$ \\
 &  & MAE & $0.2718\!\pm\!0.0029$ & $\mathbf{0.2698}\!\pm\!0.0022$ & $0.2806\!\pm\!0.0092$ \\
 & 8 & MSE & $\underline{0.2343}\!\pm\!0.0003$ & $0.2401\!\pm\!0.0042$ & $0.2365\!\pm\!0.0021$ \\
 &  & MAE & $0.2723\!\pm\!0.0005$ & $0.2749\!\pm\!0.0019$ & $0.2739\!\pm\!0.0028$ \\
\midrule
ECL & 2 & MSE & $0.1609\!\pm\!0.0003$ & $0.1556\!\pm\!0.0006$ & $0.1572\!\pm\!0.0007$ \\
 &  & MAE & $0.2569\!\pm\!0.0003$ & $0.2504\!\pm\!0.0007$ & $0.2499\!\pm\!0.0004$ \\
 & 4 & MSE & $0.1632\!\pm\!0.0006$ & $0.1568\!\pm\!0.0005$ & $\underline{0.1538}\!\pm\!0.0001$ \\
 &  & MAE & $0.2593\!\pm\!0.0004$ & $0.2509\!\pm\!0.0003$ & $\mathbf{0.2470}\!\pm\!0.0002$ \\
 & 8 & MSE & $0.1631\!\pm\!0.0005$ & $0.1570\!\pm\!0.0002$ & $\mathbf{0.1526}\!\pm\!0.0002$ \\
 &  & MAE & $0.2596\!\pm\!0.0003$ & $0.2525\!\pm\!0.0000$ & $\underline{0.2476}\!\pm\!0.0003$ \\
\midrule
Traffic & 2 & MSE & $0.4006\!\pm\!0.0012$ & $0.3938\!\pm\!0.0006$ & $0.3921\!\pm\!0.0016$ \\
 &  & MAE & $0.2756\!\pm\!0.0016$ & $0.2678\!\pm\!0.0009$ & $0.2655\!\pm\!0.0006$ \\
 & 4 & MSE & $0.4113\!\pm\!0.0011$ & $0.3919\!\pm\!0.0014$ & $\mathbf{0.3828}\!\pm\!0.0009$ \\
 &  & MAE & $0.2858\!\pm\!0.0006$ & $0.2703\!\pm\!0.0005$ & $\mathbf{0.2628}\!\pm\!0.0012$ \\
 & 8 & MSE & $0.4138\!\pm\!0.0010$ & $0.3971\!\pm\!0.0009$ & $\underline{0.3841}\!\pm\!0.0013$ \\
 &  & MAE & $0.2879\!\pm\!0.0006$ & $0.2739\!\pm\!0.0006$ & $\underline{0.2629}\!\pm\!0.0009$ \\
\bottomrule
\end{tabular*}

%% file: generated/table_generated_tangent_main_geometry.tex
\begin{tabular*}{\linewidth}{@{\extracolsep{\fill}}lrrrrr@{}}
\toprule
Data/width & Level frac. & Level cos. & Shape cos. & Full cos. & Full--shuffle \\
\midrule
ETTh1 ATD-4 & 0.265 & 0.407 & 0.167 & 0.243 & 0.246 \\
ETTh1 ATD-8 & 0.254 & 0.317 & 0.171 & 0.214 & 0.224 \\
\cmidrule(lr){1-6}
ETTh2 ATD-4 & 0.502 & 0.128 & -0.027 & 0.056 & 0.013 \\
ETTh2 ATD-8 & 0.506 & 0.137 & -0.031 & 0.068 & 0.013 \\
\cmidrule(lr){1-6}
ETTm1 ATD-4 & 0.277 & 0.318 & 0.140 & 0.193 & 0.158 \\
ETTm1 ATD-8 & 0.273 & 0.258 & 0.166 & 0.195 & 0.173 \\
\cmidrule(lr){1-6}
ETTm2 ATD-4 & 0.533 & 0.241 & 0.001 & 0.124 & 0.020 \\
ETTm2 ATD-8 & 0.524 & 0.196 & 0.009 & 0.107 & 0.019 \\
\cmidrule(lr){1-6}
Weather ATD-4 & 0.270 & 0.314 & 0.099 & 0.118 & 0.000 \\
Weather ATD-8 & 0.267 & 0.311 & 0.094 & 0.118 & 0.000 \\
\cmidrule(lr){1-6}
ECL ATD-4 & 0.287 & -0.051 & 0.130 & 0.089 & 0.037 \\
ECL ATD-8 & 0.294 & -0.021 & 0.127 & 0.081 & 0.030 \\
\cmidrule(lr){1-6}
Traffic ATD-4 & 0.063 & -0.030 & 0.109 & 0.102 & 0.051 \\
Traffic ATD-8 & 0.064 & -0.071 & 0.101 & 0.093 & 0.047 \\
\midrule
\textbf{Macro ATD-4} & 0.314 & 0.189 & 0.088 & 0.132 & 0.075 \\
\textbf{Macro ATD-8} & 0.312 & 0.161 & 0.091 & 0.125 & 0.072 \\
\bottomrule
\end{tabular*}

%% file: generated/table_generated_tangent_component_risk.tex
\begin{tabular*}{\linewidth}{@{\extracolsep{\fill}}lrrr@{}}
\toprule
Direction & ATD-2, depth 1 & ATD-2, depth 3 & ATD-8, depth 1 \\
\midrule
Level & 33/36 (0.026) & 36/36 (0.053) & 33/36 (0.036) \\
Shape & 36/36 (0.052) & 36/36 (0.070) & 36/36 (0.058) \\
\textbf{Combined} & \textbf{36/36 (0.070)} & \textbf{36/36 (0.118)} & \textbf{36/36 (0.089)} \\
\bottomrule
\end{tabular*}

%% file: generated/table_generated_tangent_drift_geometry.tex
\begin{tabular*}{\linewidth}{@{\extracolsep{\fill}}lrrrrr@{}}
\toprule
Parent/depth & Level frac. & Level cos. & Shape cos. & Full cos. & Full--shuffle \\
\midrule
AutoTimes, depth 1 & 0.428 & 0.061 & 0.263 & 0.169 & 0.091 \\
AutoTimes, depth 3 & 0.571 & 0.193 & 0.274 & 0.242 & 0.117 \\
\cmidrule(lr){1-6}
Timer, depth 1 & 0.418 & 0.208 & 0.360 & 0.290 & 0.248 \\
Timer, depth 3 & 0.542 & 0.261 & 0.400 & 0.351 & 0.294 \\
\cmidrule(lr){1-6}
TimesFM, depth 1 & 0.445 & 0.339 & 0.221 & 0.275 & 0.159 \\
TimesFM, depth 3 & 0.634 & 0.307 & 0.317 & 0.334 & 0.240 \\
\midrule
\textbf{Macro, depth 1} & 0.430 & 0.203 & 0.281 & 0.245 & 0.166 \\
\textbf{Macro, depth 3} & 0.582 & 0.254 & 0.330 & 0.309 & 0.217 \\
\bottomrule
\end{tabular*}

%% file: generated/table_generated_tangent_sensitivity.tex
\begin{tabular*}{\linewidth}{@{\extracolsep{\fill}}llrrrr@{}}
\toprule
Factor & Choice & Cfg. wins & Seed wins & $R^2$ (\%) & Retained (\%) \\
\midrule
Ramp & \textbf{staged (used)} & 14/14 & 42/42 & 1.772 & 100.0 \\
 & flat & 13/14 & 38/42 & 1.307 & 73.8 \\
 & linear & 14/14 & 42/42 & 1.806 & 101.9 \\
\midrule
Period grid & \textbf{12:12:216 (used)} & 14/14 & 42/42 & 1.772 & 100.0 \\
 & 24:24:216 & 14/14 & 42/42 & 1.772 & 100.0 \\
 & 12:12:168 & 14/14 & 42/42 & 1.772 & 100.0 \\
\midrule
Coefficient & 0.50 & 14/14 & 42/42 & 1.298 & 73.2 \\
 & 0.75 & 14/14 & 42/42 & 1.638 & 92.4 \\
 & \textbf{1.00 (fit)} & 14/14 & 42/42 & 1.772 & 100.0 \\
 & 1.25 & 14/14 & 42/42 & 1.701 & 96.0 \\
 & 1.50 & 12/14 & 35/42 & 1.423 & 80.3 \\
\bottomrule
\end{tabular*}

%% file: generated/table_generated_tangent_exchange.tex
\begin{tabular*}{\linewidth}{@{\extracolsep{\fill}}llrrrrrr@{}}
\toprule
&& \multicolumn{3}{c}{MSE} & \multicolumn{3}{c}{MAE} \\
\cmidrule(lr){3-5}\cmidrule(lr){6-8}
Mode & H & ATD & +Tangent & $\Delta\%$ & ATD & +Tangent & $\Delta\%$ \\
\midrule
ATD-4 & 96 & $0.1109\!\pm\!0.0078$ & $0.1109\!\pm\!0.0078$ & +0.00 & $0.2269\!\pm\!0.0078$ & $0.2269\!\pm\!0.0078$ & +0.00 \\
ATD-4 & 192 & \textbf{$0.2199\!\pm\!0.0062$} & $0.2282\!\pm\!0.0069$ & +3.80 & \textbf{$0.3285\!\pm\!0.0071$} & $0.3344\!\pm\!0.0075$ & +1.80 \\
ATD-4 & 336 & \textbf{$0.4026\!\pm\!0.0111$} & $0.4205\!\pm\!0.0124$ & +4.45 & \textbf{$0.4573\!\pm\!0.0096$} & $0.4670\!\pm\!0.0094$ & +2.12 \\
ATD-4 & 720 & \textbf{$1.0264\!\pm\!0.0247$} & $1.0554\!\pm\!0.0266$ & +2.82 & \textbf{$0.7610\!\pm\!0.0101$} & $0.7704\!\pm\!0.0092$ & +1.22 \\
\addlinespace[1pt]
ATD-8 & 96 & $0.1109\!\pm\!0.0078$ & $0.1109\!\pm\!0.0078$ & +0.00 & $0.2269\!\pm\!0.0078$ & $0.2269\!\pm\!0.0078$ & +0.00 \\
ATD-8 & 192 & \textbf{$0.2175\!\pm\!0.0079$} & $0.2258\!\pm\!0.0088$ & +3.80 & \textbf{$0.3273\!\pm\!0.0082$} & $0.3331\!\pm\!0.0087$ & +1.76 \\
ATD-8 & 336 & \textbf{$0.3939\!\pm\!0.0055$} & $0.4121\!\pm\!0.0075$ & +4.62 & \textbf{$0.4527\!\pm\!0.0069$} & $0.4627\!\pm\!0.0073$ & +2.20 \\
ATD-8 & 720 & \textbf{$0.9870\!\pm\!0.0163$} & $1.0291\!\pm\!0.0144$ & +4.26 & \textbf{$0.7475\!\pm\!0.0032$} & $0.7610\!\pm\!0.0032$ & +1.81 \\
\bottomrule
\end{tabular*}

%% file: generated/table_generated_tangent_h720_internal.tex
\begin{tabular*}{\linewidth}{@{\extracolsep{\fill}}lrrrrrr@{}}
\toprule
Dataset & Mode & ATD & +Tangent & $\Delta$ & Runs & Test blocks \\
\midrule
ETTh1 & ATD-4 & 0.4054 & 0.3970 & -0.0084 & 3/3 & 11/12 \\
ETTh1 & ATD-8 & 0.4052 & 0.3980 & -0.0072 & 3/3 & 11/12 \\
\cmidrule(lr){1-7}
ETTh2 & ATD-4 & 0.4191 & 0.3917 & -0.0274 & 3/3 & 11/12 \\
ETTh2 & ATD-8 & 0.4208 & 0.3972 & -0.0237 & 3/3 & 11/12 \\
\cmidrule(lr){1-7}
ETTm1 & ATD-4 & 0.4210 & 0.4151 & -0.0060 & 3/3 & 9/12 \\
ETTm1 & ATD-8 & 0.4186 & 0.4120 & -0.0066 & 3/3 & 9/12 \\
\cmidrule(lr){1-7}
ETTm2 & ATD-4 & 0.3884 & 0.3760 & -0.0124 & 3/3 & 9/12 \\
ETTm2 & ATD-8 & 0.3865 & 0.3726 & -0.0140 & 3/3 & 9/12 \\
\cmidrule(lr){1-7}
Weather & ATD-4 & 0.3278 & 0.3245 & -0.0033 & 3/3 & 11/12 \\
Weather & ATD-8 & 0.3278 & 0.3246 & -0.0032 & 3/3 & 12/12 \\
\cmidrule(lr){1-7}
ECL & ATD-4 & 0.2018 & 0.1990 & -0.0028 & 3/3 & 9/12 \\
ECL & ATD-8 & 0.2017 & 0.1994 & -0.0023 & 3/3 & 8/12 \\
\cmidrule(lr){1-7}
Traffic & ATD-4 & 0.4384 & 0.4301 & -0.0084 & 3/3 & 12/12 \\
Traffic & ATD-8 & 0.4368 & 0.4305 & -0.0063 & 3/3 & 11/12 \\
\bottomrule
\end{tabular*}